\documentclass[lettersize,journal]{IEEEtran}
\usepackage{amsmath,amsfonts}
\usepackage{algorithmic}
\usepackage{algorithm}
\usepackage{array}
\usepackage[caption=false,font=normalsize,labelfont=sf,textfont=sf]{subfig}
\usepackage{textcomp}
\usepackage{stfloats}
\usepackage{url}
\usepackage{verbatim}
\usepackage{graphicx}
\usepackage{cite}
\usepackage[utf8]{inputenc} % allow utf-8 input
\usepackage[T1]{fontenc}    % use 8-bit T1 fonts
\usepackage{url}            % simple URL typesetting
\usepackage{booktabs}       % professional-quality tables
\usepackage{amsfonts}       % blackboard math symbols
\usepackage{nicefrac}       % compact symbols for 1/2, etc.
\usepackage{microtype}      % microtypography
\usepackage{xcolor}         % colors

\usepackage{algorithm}
\usepackage{algorithmic}
\usepackage{csquotes}
\usepackage{amsmath}
\usepackage{enumitem}
\usepackage{booktabs}
\usepackage{tabularx}
\usepackage{adjustbox}
\usepackage{multirow}
\usepackage{pgfplots}
\usepgfplotslibrary{groupplots}
\pgfplotsset{compat=1.18}
\usepackage{hyperref}
\usepackage{xcolor}
\usepackage{listings}

\usepackage{subcaption} 

\newcommand{\kc}[1]{{\color{black}#1}}

\usepackage{amssymb}
\usepackage{mathtools}
\usepackage{amsthm}
\theoremstyle{plain}
\newtheorem{theorem}{Theorem}[section]

\theoremstyle{definition}

\theoremstyle{remark}

\begin{document}

\title{GVPO++: Group Variance Policy Optimization for LLM Post-Training and On-Policy Distillation}

\author{Kaichen Zhang, Yuzhong Hong, Junwei Bao, Hongfei Jiang, Yang Song, Dingqian Hong, and \\ Hui Xiong,~\IEEEmembership{Fellow,~IEEE}
        % <-this % stops a space
% \thanks{This paper was produced by the IEEE Publication Technology Group. They are in Piscataway, NJ.}% <-this % stops a space
% \thanks{Manuscript received April 19, 2021; revised August 16, 2021.}
}

% The paper headers
\markboth{Journal of \LaTeX\ Class Files,~Vol.~14, No.~8, August~2021}%
{Shell \MakeLowercase{\textit{et al.}}: A Sample Article Using IEEEtran.cls for IEEE Journals}

% \IEEEpubid{0000--0000/00\$00.00~\copyright~2021 IEEE}
% Remember, if you use this you must call \IEEEpubidadjcol in the second
% column for its text to clear the IEEEpubid mark.

\maketitle

\vspace{-5mm}
\begin{abstract}
Post-training plays a pivotal role in enhancing the reasoning capabilities and task-specific expertise of large language models (LLMs). Despite recent advances in post-training methods, such as Group Relative Policy Optimization (GRPO), their practical deployment remains impeded by training instability arising from the reliance on \textit{importance sampling}.

We introduce Group Variance Policy Optimization (GVPO), a novel post-training method that integrates the analytical solution of KL-constrained reward maximization into its gradient weighting scheme. This formulation provides an intuitive interpretation: GVPO's gradient corresponds to the mean squared error between the central distance of implicit rewards and that of actual rewards. GVPO offers two key advantages: (1) it guarantees a unique optimal solution, exactly to the KL-constrained reward maximization objective, and (2) it enables flexible sampling distributions without requiring importance sampling.

Beyond general post-training, we show that GVPO naturally extends to on-policy distillation (OPD). 
Furthermore, GVPO enables the optimization of a broad family of extended OPD objectives, providing a principled foundation for diverse objective design.
By unifying theoretical guarantees with practical adaptability, GVPO establishes a new paradigm for reliable and versatile LLM post-training and on-policy distillation.

\end{abstract}

\section{Introduction}
Large language models (LLMs) \cite{zhao2025surveylargelanguagemodels,zheng2026lifelong}, trained on extensive datasets, exhibit impressive general-purpose capabilities, yet their practical utility depend critically on post-training \cite{tie2025surveyposttraininglargelanguage} refinement. While pre-training \cite{zhou2023comprehensivesurveypretrainedfoundation, qian2025beyond} equips LLMs with broad linguistic patterns, post-training techniques—such as supervised fine-tuning (SFT) \cite{ouyang2022training} and reinforcement learning \cite{miao2024optimizing, bai2022training} are indispensable for adapting these models to specialized applications and enhancing their reasoning capabilities \cite{kumar2026llm}.

Recent advances in post-training have been exemplified by Group Relative Policy Optimization (GRPO) \cite{shao2024deepseekmath}. Unlike conventional reinforcement learning frameworks \cite{schulman2017proximal}, which require training a separate value function, GRPO estimates the advantage by normalizing reward scores across a group of sampled responses. By eliminating the need for an auxiliary value model—whose computational and memory requirements can be comparable to those of the policy model—GRPO substantially reduces training overhead. This enables more efficient sampling and improves the scalability of reinforcement learning for large language models. Notably, DeepSeek-R1 \cite{guo2025deepseek} adopts GRPO in its post-training pipeline and demonstrates its effectiveness in achieving strong reasoning performance.

\begin{figure*}[t]
\centering
\includegraphics[width=\textwidth]{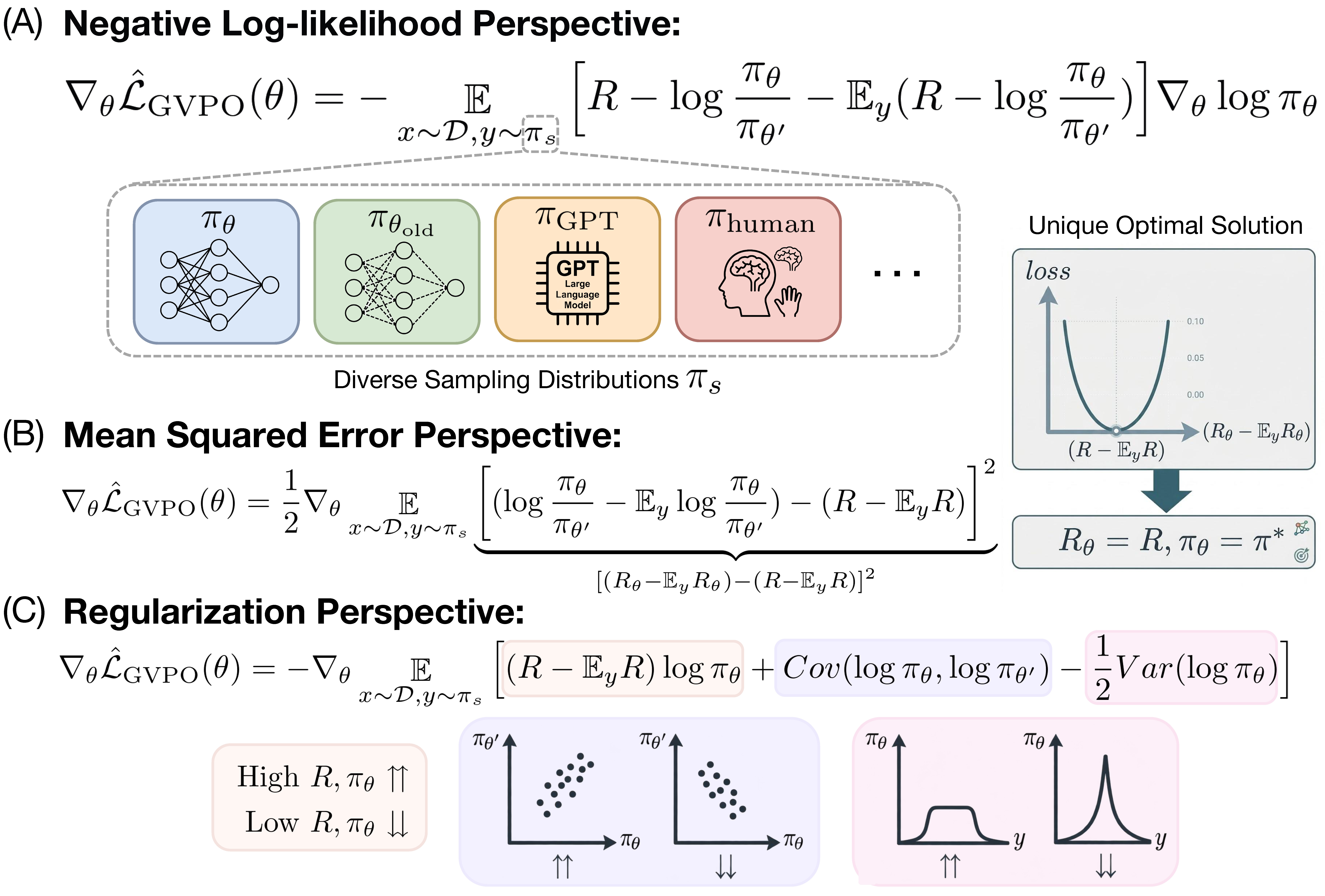}
\vspace*{-5mm}
\caption{\textbf{Three equivalent formulations of the GVPO loss provide distinct interpretations}: (A) the negative log-likelihood perspective highlights that GVPO can accommodate diverse sampling distributions without requiring importance sampling; (B) the mean squared error perspective reveals GVPO’s unique optimum, which simultaneously maximizes the reward subject to a KL-divergence constraint; and (C) the regularization perspective shows GVPO’s implicit regularization terms, which promote stable policy optimization. We set the hyperparameter $\beta=1$ for simplicity.}
\label{fig:three}
\vspace{-5mm}
\end{figure*}

However, prior work has documented training instability in GRPO \cite{yu2025dapo, liu2025understanding}. Specifically, GRPO relies on importance sampling to estimate gradients from off-policy samples, introducing an importance sampling ratio that can become arbitrarily large. Such unbounded ratios can lead to high-variance and unstable gradient estimates, undermining the robustness of GRPO and limiting its broader practical adoption. To address this issue, we propose \textbf{Group Variance Policy Optimization (GVPO)}, a novel approach designed to enable more reliable and versatile LLM post-training.

Our analysis begins with a key observation: post-training algorithms—including but not limited to SFT, Reject Sampling \cite{touvron2023llama}, and GRPO—share a unified mathematical structure in their loss gradients \cite{shao2024deepseekmath, gao2024towards}. Specifically, each method’s gradient can be expressed as a weighted sum of the gradients of the log-likelihoods of responses. 
This unified framework reveals that we can directly design weights to encode preferences—positive weights amplify gradients for favored responses, while negative weights suppress disfavored ones, with magnitudes modulating the strength of preference.

Motivated by the success of Direct Preference Optimization (DPO) \cite{rafailov2023direct}—which utilizes a closed-form link between reward models and the optimal policy under KL-divergence constraints \cite{jaques2017sequence}—we explore how to leverage this analytical relationship. A central obstacle arises from the partition function in the closed-form formula, which requires intractable expectation calculations over all possible responses. To address this, we identify a critical condition: 
\textit{when the sum of assigned response weights within a prompt group equals zero, the partition function becomes invariant across compared responses},
effectively canceling out in the policy update rule. This insight eliminates the need for explicit estimation of the partition function, thereby enabling deployment of the closed-form optimal policy while retaining its theoretical advantages.

Based on the previous findings, we design GVPO's weighting scheme where the gradient weight of a response in a group is the difference between the central distance of implicit rewards-which derive from the current policy and the reference policy-and that of actual rewards, illustrated in Figure~\ref{fig:three} (A).

We demonstrate that GVPO loss function carries physically meaningful interpretations. Specifically, we establish that its gradient equals that of a mean squared error loss measuring the discrepancy between implicit and actual reward central distances, illustrated in Figure~\ref{fig:three} (B).

In addition, GVPO can be interpreted through the lens of regularization, as illustrated in Figure~\ref{fig:three} (C). Specifically, its loss function can be decomposed into three components: (1) a group-relative advantage term that promotes advantage maximization by favoring responses with higher expected returns; (2) a covariance term between the current and reference policies that constrains excessive policy deviation and thereby stabilizes policy updates; and (3) a variance term of the current policy that encourages moderate entropy, naturally balancing exploration and exploitation.

We demonstrate that GVPO offers two key advantages:
\begin{itemize}[leftmargin=0.5cm]
\item GVPO has a unique optimal solution, which coincides precisely with the optimal solution of the KL-constrained reward maximization. This guarantee gives GVPO a significant theoretical advantage over DPO. Prior work \cite{bong2022generalized,hong2024energybasedpreferencemodeloffers} shows that DPO may fail to align with the optimal policy for the KL-constrained reward maximization problem due to limitations of the Bradley-Terry model \cite{wu2022asymptoticcomparisonidentifyingconstraints}. 
 \item GVPO supports flexible sampling distributions that avoids importance sampling and on-policy limitations. GVPO retains theoretical guarantees for the unique optimal solution under any sampling distribution satisfying a mild condition. Unlike on-policy approaches \cite{singh2000convergence,williams1992simple}, GVPO facilitates efficient off-policy training using diverse samplings. In contrast to off-policy methods \cite{shao2024deepseekmath,schulman2017proximal} reliant on importance sampling, GVPO inherently avoids gradient explosion risks without introducing bias through clipping techniques. 
 \end{itemize}

Beyond general post-training, we show that GVPO naturally extends to on-policy distillation (OPD) by reformulating its reverse KL divergence objective as the KL-regularized reinforcement learning objective studied in general post-training. In addition to retaining the aforementioned advantages of GVPO, its application to OPD operates at the sequence level, substantially broadening its applicability by enabling distillation between models from different model families or series, whose tokenizers and vocabularies are incompatible.

Furthermore, GVPO enables the optimization of a broad class of extended reverse KL objectives by allowing each response to be assigned a weighting scheme that induces a corresponding Boltzmann distribution over responses. Crucially, GVPO makes such optimization tractable by eliminating the need to estimate partition functions or perform infeasible sampling from these Boltzmann distributions. GVPO thus provides a principled foundation for designing diverse OPD objectives. In particular, weighting schemes can be constructed to suppress or promote specific undesirable or desirable response patterns during distillation, while retaining the theoretical guarantee that the teacher policy is its exact optimization destination.

In summary, by unifying theoretical guarantees with practical adaptability, GVPO establishes a new paradigm for reliable and versatile LLM post-training and on-policy distillation.
\section{Related Work}

\textbf{Post-training}. This paper closely relates to the LLM post-training \cite{tie2025surveyposttraininglargelanguage, gong2025pushing} literature. Post-training refers to the process of further optimizing a pretrained language model \cite{zhou2023comprehensivesurveypretrainedfoundation} to adapt the model to desired behaviors and downstream tasks \cite{besta2025demystifying}.

A prominent paradigm for post-training is Reinforcement Learning from Human Feedback (RLHF) \cite{bai2022training}, which typically consists of three stages: Supervised Fine-tuning (SFT) \cite{ouyang2022training, xu2026parameter}, reward modeling, and reinforcement learning. First, the pretrained model is fine-tuned on instruction–response pairs. Next, a reward model is trained from human preference comparisons to assess response quality. Finally, the policy model is optimized against the learned reward, commonly using Proximal Policy Optimization (PPO) \cite{schulman2017proximal}, to better align its behavior with human preferences. Reinforcement Learning with Verifiable Rewards (RLVR) tasks use verifiable outcomes, such as mathematical correctness, as reward signals. RLVR has become a common evaluation setting for post-training methods, as verifiable rewards avoid the reward hacking \cite{skalse2022defining} issues commonly encountered in RLHF, providing a reliable objective to assess their effectiveness.

Group Relative Policy Optimization (GRPO) \cite{shao2024deepseekmath, zhang2025rspo} simplifies PPO by removing the need for a separate value model. For each prompt, GRPO samples a group of responses and computes their relative advantages using the group-level rewards, which are then used to update the policy. This design reduces memory and computational costs while being particularly effective for LLM post-training. 
Compared with GRPO and its variants, such as Dr.GRPO \cite{liu2025understanding}, Remax \cite{li2023remax}, Reinforce++ \cite{hu2025reinforce++}, DAPO \cite{yu2025dapo}, our proposed GVPO does not rely on importance sampling and support diverse sampling distributions, while maintaining theoretical guarantees.

Compared with our preliminary conference paper \cite{zhang2026gvpo}, this journal version substantially extends GVPO from general reward-maximization post-training to on-policy distillation (OPD). In addition, we have strengthened the theoretical analysis, expanded the experimental evaluation, and improved the clarity and overall presentation of the manuscript.

\textbf{On-policy Distillation.} Model distillation \cite{gou2021knowledge} transfers knowledge from a teacher to a student. A common approach uses teacher-generated responses as targets for SFT or minimizing the forward KL divergence from the teacher to the student. Large language model OPD \cite{song2026surveyonpolicydistillationlarge} instead queries the teacher on responses sampled from the student’s own policy and typically optimizes a reverse KL objective. By training on the student’s on-policy distribution, OPD alleviates exposure bias by reducing the mismatch between the states encountered during training and those encountered at inference time. 

The representative OPD methods include MiniLLM \cite{gu2024minillm} and GKD \cite{agarwal2024policy}, which employ approximation-based or heuristic designs and require the teacher and student models to share the same vocabulary and tokenizer. Our paper contributes to the OPD literature by proposing a sentence-level algorithm that offers flexible usage while providing theoretical guarantees. Furthermore, our method enables the optimization of a broad class of extended reverse KL objectives, providing a principled foundation for the design of diverse OPD objectives.

\newpage
\section{Preliminary}
Large language models take a prompt $x$ as input and generate a response $y$ as output. A policy $\pi_{\theta}(y_t|x,y_{<t})$ with parameter $\theta$ maps a sequence of tokens generated ($x$ and $y_{<t}$) to a probability distribution over the next token $y_t$. We also denote $\pi_{\theta}(y|x)$ as the probability of generating the response $y$ from $x$. A reward model $R(x,y)$ scores the response $y$ as the reply to the prompt $x$, which can be a trainable function that implicitly reflects human preferences; or explicit evaluation ratings of human beings; or a verifiable outcome, such as correctness.

The general purpose of post-training of large language model is summarized as following: Given a dataset of prompts $x \sim D$, a reward model $R$, 
the objective is to train a policy $\pi_{\theta}$ that generates responses with higher rewards, that is, maximize $E_{x\sim D,y\sim \pi_{\theta}(\cdot|x)}R(x,y)$. 
In practice, to improve training stability and prevent the policy from deviating excessively from a reference policy $\pi_{\theta^\prime}$, post-training commonly employs the following KL-regularized objective:
\begin{equation}
\label{equation:rewardkl}
\max_{\pi_{\theta}} \mathop{\mathbb{E}}_{x\sim\mathcal{D},y\sim\pi_\theta(y|x)}[R(x,y)]-\beta\mathbb{D}_{KL}[\pi_\theta(y|x)||\pi_{\theta^\prime}(y|x)]
\end{equation}
where $\beta > 0$ controls the divergence penalty from policy $\pi_{\theta^\prime}$.

\textbf{Optimal solution to the KL-regularized objective.}
In RLHF, pairwise response preferences $(x,y_w,y_l)$ are used for reward model training, where $y_w$ denotes the preferred response and $y_l$ the dispreferred response to prompt $x$.

Rather than employing separate reward modeling and policy optimization stages, Direct Preference Optimization (DPO) \cite{rafailov2023direct} derives a single-stage training paradigm by exploiting the analytical relationship between optimal policies and reward functions. The optimal solution to Equation~\ref{equation:rewardkl} satisfies:
\begin{equation}
\label{equation:optimal}
\pi^*(y|x)=\frac{1}{Z(x)}\pi_{\theta^\prime}(y|x)e^{R(x,y)/\beta}
\end{equation}
which implies the corresponding reward function:
\begin{equation}
\label{equation:Rxy}
R(x,y)=\beta \log\frac{\pi^*(y|x)}{\pi_{\theta^\prime}(y|x)}+\beta \log Z(x)
\end{equation}
where $Z(x)=\sum_y\pi_{\theta^\prime}(y|x)e^{R(x,y)/\beta}$ represents the partition function. DPO circumvents explicit computation of $Z(x)$ by substituting the reward expression from Equation~\ref{equation:Rxy} into the Bradley-Terry loss \cite{bradley1952rank}, yielding the final objective:
\begin{equation} \nonumber
\label{equation:DPO}
\resizebox{\hsize}{!}{$
\begin{aligned}
\mathcal{L}(\theta) = -\sum_{(x, y_w, y_l) \in \mathcal{D}}\log \sigma(\beta \log\frac{\pi_\theta(y_w|x)}{\pi_{ref}(y_w|x)}- \beta \log\frac{\pi_\theta(y_l|x)}{\pi_{ref}(y_l|x)})
\end{aligned}
$}
\end{equation}

\textbf{Unified framework of post-training.} Post-training algorithms are observed to share a unified framework \cite{shao2024deepseekmath, gao2024towards}, in which their losses' gradients share a same format:
\begin{equation}
\nabla_\theta\mathcal{L}(\theta) = -\sum_{(x, y_1,y_2,..,y_k) \in \mathcal{D}} \sum_{i=1}^k w_i \nabla_\theta\log \pi_\theta(y_i|x) \label{equation:pt_framework}
\end{equation}
SFT typically has its $w=1$. GRPO's essential weights are the standard scores of its rewards in a prompt group, $w_i =\frac{R_i-\overline{R}}{\text{std}(R)}$. Though it is not obvious for DPO, its gradients also share the same format, in which $w_w = \sigma(\beta \log\frac{\pi_\theta(y_l|x)}{\pi_{ref}(y_l|x)}- \beta \log\frac{\pi_\theta(y_w|x)}{\pi_{ref}(y_w|x)})$ and $w_l=-w_w$. This unified framework for post-training follows from the chain rule of derivatives.

\section{Group Variance Policy Optimization}
\subsection{Motivation}

The unified post-training framework (Equation~\ref{equation:pt_framework}) indicates that response preferences can be directly incorporated through the assignment of weights $w_i$.

To determine appropriate weights, we draw inspiration from the success of DPO. In particular, we aim to exploit the closed-form relationship between rewards and the optimal solution to the KL-constrained reward maximization objective: 
\begin{equation}
R_\theta(x,y)=\beta \log\frac{\pi_\theta(y|x)}{\pi_{\theta^\prime}(y|x)}+\beta \log Z(x). 
\end{equation}

However, the closed-form formula contains a partition function $Z(x)$ that is expensive to estimate in practice, because the function requires calculating the expectation of all possible responses.

To address this issue, we identify a critical condition: when the sum of assigned response weights within a prompt group equals zero, $\sum_{i=1}^k w_i=0$, the partition function becomes invariant across responses: 
\begin{equation}
\sum_{i=1}^k w_iR_\theta(x,y_i)= \sum_{i=1}^k w_i\beta\log
\frac{\pi_\theta(y_i|x)}{\pi_{\theta^\prime}(y_i|x)}.    
\end{equation}

\subsection{Method} \label{section:method}
Build on this insight, we propose Group Variance Policy Optimization (GVPO), whose gradient weight $w_i$ is the difference between the central distance of implicit rewards-which derive from policy $\pi_{\theta}$ and policy $\pi_{\theta^\prime}$-and that of actual rewards. Formally, GVPO's gradient $\nabla_\theta\mathcal{L}_{\text{GVPO}}(\theta) =$
\begin{equation}
\begin{aligned}
\label{equation:sample_gvpo}
-\beta\sum_{x, \{y_i\}} \sum_{i=1}^k \Big [(R(x,y_i)-\overline{R})-\beta(\log\frac{\pi_\theta(y_i|x)}{\pi_{\theta^\prime}(y_i|x)} -\overline{\log\frac{\pi_\theta}{\pi_{\theta^\prime}}})\Big]
\\ \nabla_\theta\log \pi_\theta(y_i|x)
\end{aligned}
\end{equation}
where the group average actual reward $\overline{R}=\frac{1}{k}\sum_{i=1}^k R(x,y_i)$, and the average implicit reward $\overline{\log\frac{\pi_\theta}{\pi_{\theta^\prime}}}=\frac{1}{k}\sum_{i=1}^k\log\frac{\pi_\theta(y_i|x)}{\pi_{\theta^\prime}(y_i|x)}$. We note that GVPO's gradient satisfies $\sum_{i=1}^k w_i=0$. 

Algorithm~\ref{alg:gvpo} presents the pseudocode of our proposed algorithm, where we set $\pi_{\theta^\prime}$ to $\pi_{\theta_{old}}$.

We demonstrate that GVPO's objective carries physically meaningful interpretations: 
\begin{equation}
\resizebox{\hsize}{!}{$
\begin{aligned}
\nonumber
&\nabla_\theta\mathcal{L}_{\text{GVPO}}(\theta) \\
=&-\sum_{x, \{y_i\} } \sum_{i=1}^k [(R(x,y_i)-\overline{R})-(R_\theta(x,y_i)-\overline{R_\theta})] \nabla_\theta \beta\log \pi_\theta(y_i|x) \\
=&-\sum_{x, \{y_i\}} \sum_{i=1}^k [(R(x,y_i)-\overline{R})-(R_\theta(x,y_i)-\overline{R_\theta})] \nabla_\theta R_\theta(x,y_i) \\
=&-\sum_{x, \{y_i\} } \sum_{i=1}^k [(R(x,y_i)-\overline{R})-(R_\theta(x,y_i)-\overline{R_\theta})] \nabla_\theta(R_\theta(x,y_i)-\overline{R_\theta}) \\
=& \frac{1}{2}\nabla_\theta\sum_{x, \{y_i\} } \sum_{i=1}^k [(R_\theta(x,y_i)-\overline{R_\theta})-(R(x,y_i)-\overline{R})]^2 \\
\end{aligned}
$}
\end{equation}
\kc{The first and second steps hold because $\beta\log Z(x)$ can cancel out. }
The second step holds because $\sum_{i=1}^k w_i \nabla_\theta \overline{R} =0$. The third step holds because $\nabla_xf(x)^2=2f(x)\nabla_xf(x)$.

Essentially, we have established that GVPO's gradient mathematically equals that of a mean squared error loss measuring the discrepancy between implicit and actual reward central distances. Intuitively, when implicit rewards equal actual rewards or with a constant group shift, the GVPO's loss is minimized. This interpretation also implies that the response with higher actual rewards in a group is also encouraged to have higher implicit rewards, indicating higher $\log\frac{\pi_\theta(y_i|x)}{\pi_{\theta^\prime}(y_i|x)}$.

\kc{Furthermore, by rearranging the mean squared error loss, we can derive a variance-based formulation, which represents the "\textbf{Variance}" term in the name GVPO:}
\[
 \frac{1}{2}\nabla_\theta\sum_{x, \{y_i\} } \sum_{i=1}^k [(R_\theta(x,y_i)-R(x,y_i))-\overline{(R_\theta-R)}]^2
\]

\begin{algorithm}
\caption{Group Variance Policy Optimization}
\label{alg:gvpo}
\begin{algorithmic}[1]
\REQUIRE initial policy $\pi_{\theta}$; prompt distribution $\mathcal{D}$; hyperparameter $\beta$
    \FOR{step $= 1, \dots, n$}
        \STATE Sample a batch $\mathcal{D}_b$ from $\mathcal{D}$
        \STATE Update the old policy model $\pi_{\theta_{old}} \gets \pi_{\theta}$
        \STATE Sample $k$ responses $\{y_i\}_{i=1}^{k} \sim \pi_{s}(\cdot|x)$ for each prompt $x \in \mathcal{D}_b$
        \STATE Compute rewards $\{R(x,y_i)\}_{i=1}^{k}$ for every sampled response $y_i$ and prompt $x$
        \kc{\STATE Iteratively update policy $\pi_{\theta}$ by minimizing the GVPO loss (Equation~\ref{equation:sample_gvpo}, setting $\pi_{\theta^\prime}=\pi_{\theta_{old}}$)}
    \ENDFOR
\STATE \textbf{Return} $\pi_{\theta}$
\end{algorithmic}
\end{algorithm}

\subsection{Theoretical Guarantee}

We show that GVPO has a unique optimal solution, and this unique optimal solution is exactly the optimal solution of reward maximization with KL constraint (Equation \ref{equation:optimal}). 
\begin{theorem}
\label{theorem:gvpo}
The unique optimal policy that minimizes $\hat{\mathcal{L}}_{\text{GVPO}}(\theta)$, defined as
\begin{equation}
\label{equation:gvpo}
\begin{aligned}
\hat{\mathcal{L}}_{\text{GVPO}}(\theta)=
\mathbb{E}_{x \sim \mathcal{D}} \mathbb{E}_{y \sim \pi_{s}(\cdot|x)}\Big[(R_\theta(x,y)-
\mathbb{E}_{y \sim \pi_{s}}R_\theta(x,y))
\\-(R(x,y)-\mathbb{E}_{y \sim \pi_{s}}R(x,y))\Big]^2
\end{aligned}
\end{equation}
, is given by $\pi_\theta (y|x)=\pi^* (y|x)= \frac{1}{Z(x)}\pi_{\theta^\prime}(y|x)e^{R(x,y)/\beta}$ 
for $\pi_s=\pi_{\theta^\prime}$.
\end{theorem}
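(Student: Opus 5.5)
The plan is to show that $\hat{\mathcal{L}}_{\text{GVPO}}$ is nonnegative, that $\pi^*$ makes it vanish, and that any policy making it vanish must equal $\pi^*$. Throughout, I read $R_\theta(x,y)=\beta\log\frac{\pi_\theta(y|x)}{\pi_{\theta'}(y|x)}$, with any additive $\beta\log Z$ constant dropping out of the centered difference. I also assume that $\pi_{\theta'}(y|x)>0$ on the response space, which is the support condition the paper alludes to, and that $\mathcal{D}$ places positive mass on each $x$ under consideration. Then "unique" is understood $\mathcal{D}$-almost surely in $x$.

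\textbf{Step 1 (lower bound and attainment).} The loss is an expectation of a square, so $\hat{\mathcal{L}}_{\text{GVPO}}(\theta)\ge 0$. Plugging in $\pi_\theta=\pi^*$ and using Equation~\ref{equation:optimal} gives
\[
R_\theta(x,y)=\beta\log\frac{\pi^*(y|x)}{\pi_{\theta'}(y|x)}=R(x,y)-\beta\log Z(x).
\]
Hence $R_\theta(x,y)-R(x,y)$ is constant in $y$ for each $x$. Its centered version under $\pi_s$ is therefore zero, the loss equals $0$, and $\pi^*$ is a minimizer.

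\textbf{Step 2 (uniqueness).} Suppose some $\pi_\theta$ also attains $0$. Then for $\mathcal{D}$-a.e.\ $x$,
\[
R_\theta(x,y)-R(x,y)=c(x)
\]
for $\pi_s$-a.e.\ $y$, where $c(x)=\mathbb{E}_{y\sim\pi_s}[R_\theta-R]$. Since $\pi_s=\pi_{\theta'}$ has full support, this identity holds for every $y$. Rearranging gives
\[
\pi_\theta(y|x)=\pi_{\theta'}(y|x)\,e^{R(x,y)/\beta}\,e^{c(x)/\beta}.
\]
Normalization $\sum_y\pi_\theta(y|x)=1$ forces $e^{c(x)/\beta}=1/Z(x)$, so $\pi_\theta=\pi^*$.

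\textbf{Main obstacle.} The delicate point is the passage from "zero loss" to "pointwise equality for all $y$." This is exactly where $\pi_s=\pi_{\theta'}$ (or more generally $\operatorname{supp}\pi_s\supseteq\operatorname{supp}\pi_{\theta'}$) is needed. Without it, $\pi_\theta$ could be arbitrary off the support of $\pi_s$, and the normalization argument would fail. There is a second, subtler issue: responses where $\pi_{\theta'}(y|x)=0$. There I would argue that $R_\theta$ is either undefined or $+\infty$, so $\pi_\theta$ must also vanish, because otherwise its mass would be outside the support while the equation above already sums to $1$ over the support. A final check is that a finite-variance convention makes the expectation well defined, for example by assuming bounded rewards and a finite or countable response space.
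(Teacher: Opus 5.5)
Your proof is correct and follows the paper's argument essentially step for step. It uses nonnegativity plus attainment at $\pi^*$ (the $\beta\log Z(x)$ shift cancels under centering); zero loss then forces $R_\theta-R$ to be constant in $y$ on the support of $\pi_s$, and normalization pins the constant so that $\pi_\theta=\pi^*$. One caution on your aside about responses with $\pi_{\theta'}(y|x)=0$: the loss places no constraint there, and the identity summed over the support gives $e^{c(x)/\beta}Z(x)$ rather than $1$, so ruling out mass outside $\operatorname{supp}\pi_{\theta'}$ genuinely requires assuming $\pi_\theta\ll\pi_{\theta'}$ (the paper tacitly makes the same assumption when it writes $\sum_{y:\pi_{\theta'}(y|x)>0}\pi_\theta(y|x)=1$), though this is moot under your full-support hypothesis.
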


\kc{We prove the theorem by establishing both necessity and sufficiency, provided in Appendix~\ref{app:proof1}.}

Theorem~\ref{theorem:gvpo} implies that the unique global minimizer of $\hat{\mathcal{L}}_{\text{GVPO}}(\theta)$ also solves the KL-constrained reward maximization problem, maximizing the expected reward while remaining close to a reference policy. The uniqueness of this minimizer means that no other minimum can attain the maximum value of the constrained reward objective, which provides a well-defined optimization target in practice and eliminates ambiguity among competing candidate solutions.

\begin{theorem}
\label{corollary:pi_s}
The Theorem~\ref{theorem:gvpo} also holds for any sampling distribution $\pi_s$ satisfying $\forall x,\{y|\pi_{\theta^\prime}(y|x)>0\}\subseteq\{y|\pi_s(y|x)>0\}.$
\end{theorem}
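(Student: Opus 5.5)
The plan is to reduce the loss to a weighted variance and read off the minimizer. Write $\delta_\theta(x,y)=R_\theta(x,y)-R(x,y)$, where $R_\theta(x,y)=\beta\log\frac{\pi_\theta(y|x)}{\pi_{\theta^\prime}(y|x)}$ is the implicit reward (up to the $y$-independent $\beta\log Z(x)$, which cancels inside the centered difference). By linearity of expectation, the bracket in $\hat{\mathcal{L}}_{\text{GVPO}}$ equals $\delta_\theta(x,y)-\mathbb{E}_{y\sim\pi_s}\delta_\theta(x,y)$. Hence
\[
\hat{\mathcal{L}}_{\text{GVPO}}(\theta)=\mathbb{E}_{x\sim\mathcal{D}}\,\mathrm{Var}_{y\sim\pi_s(\cdot|x)}\big[\delta_\theta(x,y)\big]\ \ge 0 .
\]

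\textbf{Sufficiency.} Take $\pi_\theta=\pi^*$. Then $\beta\log\frac{\pi^*}{\pi_{\theta^\prime}}=R-\beta\log Z(x)$, so $\delta_\theta(x,\cdot)$ is the constant $-\beta\log Z(x)$ on the support of $\pi_s$. The variance vanishes and the loss attains its lower bound $0$.

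\textbf{Necessity and uniqueness.} Suppose the loss is $0$. Then for $\mathcal{D}$-almost every $x$, $\delta_\theta(x,y)=c(x)$ for all $y$ with $\pi_s(y|x)>0$. The support hypothesis $\{y:\pi_{\theta^\prime}(y|x)>0\}\subseteq\{y:\pi_s(y|x)>0\}$ then gives $\pi_\theta(y|x)=\pi_{\theta^\prime}(y|x)e^{R(x,y)/\beta}e^{c(x)/\beta}$ for every $y$ in the support of $\pi_{\theta^\prime}$.

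\textbf{Main obstacle.} The delicate step is ruling out mass of $\pi_\theta$ outside $\mathrm{supp}\,\pi_{\theta^\prime}$, where $R_\theta$ is $+\infty$ and the formula gives no information. I would handle this by restricting attention to policies with $\pi_\theta\ll\pi_{\theta^\prime}$, which is the implicit domain on which $R_\theta$ is finite and the objective (\ref{equation:rewardkl}) has finite KL. Under that restriction, $\pi_\theta$ is supported inside $\mathrm{supp}\,\pi_{\theta^\prime}$. Normalization $\sum_y\pi_\theta(y|x)=1$ then forces $e^{c(x)/\beta}=1/Z(x)$, so $\pi_\theta=\pi^*$. This proves uniqueness for almost every $x$ in the support of $\mathcal{D}$.

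The case $\pi_s=\pi_{\theta^\prime}$ in Theorem~\ref{theorem:gvpo} is the special instance where the support condition holds trivially. The argument above therefore proves both statements simultaneously.
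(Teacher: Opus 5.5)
Your proposal is correct and follows essentially the same route as the paper's proof: the loss is a nonnegative squared centered gap that $\pi^*$ drives to zero, and zero loss forces $\beta\log\frac{\pi_\theta}{\pi_{\theta^\prime}}-R$ to be constant on $\{y:\pi_s(y|x)>0\}\supseteq\{y:\pi_{\theta^\prime}(y|x)>0\}$, after which normalization pins the constant to $-\beta\log Z(x)$. Your variance rewriting only repackages the same quantity, and your explicit restriction to $\pi_\theta\ll\pi_{\theta^\prime}$ states openly what the paper assumes tacitly when it writes $\sum_{y:\pi_{\theta^\prime}(y|x)>0}\pi_\theta(y|x)=1$ (both arguments likewise gloss over the undefined ratio $0/0$ at points with $\pi_s>0=\pi_{\theta^\prime}$, which is moot for full-support softmax policies).
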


Beyond the conventional practice of sampling from the reference policy ($\pi_s=\pi_{\theta^\prime}$), GVPO retains the theoretical guarantee of a unique optimal solution under any sampling distribution that satisfies a mild condition. This condition is readily met by any policy $\pi$ where $\pi(y|x)>0$, a criterion inherently fulfilled by contemporary LLM policies utilizing softmax decoding.

The Theorem~\ref{corollary:pi_s} of GVPO opens a new methodological avenue for off-policy LLM post-training. Prior off-policy methods have relied heavily on importance sampling \cite{tokdar2010importance}, which suffers from two key limitations:
(1) when $\pi_\theta$ diverges substantially from $\pi_s$, the importance weight $\frac{\pi_\theta}{\pi_s}$ can become either excessively large or vanishingly small, destabilizing training; and
(2) when sampling involves heuristic or non-parametric components, $\pi_s$ becomes intractable to compute, thereby prohibiting techniques such as experience replay \cite{fedus2020revisiting} in modern LLM post-training. In contrast, GVPO supports highly flexible off-policy sampling strategies while maintaining strong theoretical guarantees, enabling more robust and practical post-training paradigms for large language models.

\begin{theorem}
\label{theorem:gvpo_alg}
The $n$-step online algorithm, which uses $\hat{\mathcal{L}}_{\text{GVPO}}(\theta_t)$ to iteratively update the initial policy $\pi_{\theta_0}$ by setting $\pi_{\theta'} = \pi_{\theta_{t-1}}$ at each step $t = 1, \dots, n$, maximizes the objective:
\begin{equation}
\label{equation:gvpo_alg}
\mathbb{E}_{x\sim\mathcal{D}, y\sim\pi_\theta(y|x)}[R(x,y)] - \frac{\beta}{n}\mathbb{D}_{\text{KL}}[\pi_\theta(y|x) \| \pi_{\theta_0}(y|x)].
\end{equation}
\end{theorem}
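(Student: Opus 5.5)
The plan is to apply Theorem~\ref{theorem:gvpo} (with Theorem~\ref{corollary:pi_s}, which permits sampling from $\pi_{\theta_{t-1}}$ or any other sampler with adequate support) at each step. Each step of the online algorithm then returns the exact minimizer of $\hat{\mathcal{L}}_{\text{GVPO}}$ with reference $\pi_{\theta'}=\pi_{\theta_{t-1}}$. By Theorem~\ref{theorem:gvpo}, this minimizer is
\begin{equation}
\nonumber
\pi_{\theta_t}(y|x)=\frac{1}{Z_t(x)}\pi_{\theta_{t-1}}(y|x)e^{R(x,y)/\beta},\qquad Z_t(x)=\sum_y \pi_{\theta_{t-1}}(y|x)e^{R(x,y)/\beta}.
\end{equation}
I will work under this idealization: each step reaches its global optimum, and the policy class is expressive enough to represent $\pi^*$. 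I will state this assumption explicitly, since the theorem cannot hold without it.

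Next I will unroll the recursion by induction on $t$. The claim is
\begin{equation}
\nonumber
\pi_{\theta_t}(y|x)=\frac{1}{\tilde Z_t(x)}\pi_{\theta_0}(y|x)e^{tR(x,y)/\beta},\qquad \tilde Z_t(x)=\sum_y\pi_{\theta_0}(y|x)e^{tR(x,y)/\beta}.
\end{equation}
The inductive step substitutes the hypothesis for $\pi_{\theta_{t-1}}$ into the one-step formula. The factor $\tilde Z_{t-1}(x)$ cancels between numerator and normalizer, which leaves $\pi_{\theta_0}e^{tR/\beta}$ normalized. The support of $\pi_{\theta_t}$ equals that of $\pi_{\theta_0}$ at every step, so the support condition of Theorem~\ref{corollary:pi_s} stays satisfied if we sample from the current policy.

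Finally, I will use the closed form of Equation~\ref{equation:optimal}. The KL-regularized objective with coefficient $\beta'$ and reference $\pi_{\theta_0}$ is maximized by $\pi_{\theta_0}e^{R/\beta'}/Z$. Setting $\beta'=\beta/n$ shows that $\pi_{\theta_n}$ is exactly the maximizer of Equation~\ref{equation:gvpo_alg}. Uniqueness follows from the same fact. For each $x$ the objective equals
\begin{equation}
\nonumber
-\frac{\beta}{n}\mathbb{D}_{\text{KL}}\big[\pi_\theta\,\|\,\pi_{\theta_0}e^{nR/\beta}/\tilde Z_n\big]+\frac{\beta}{n}\log\tilde Z_n(x),
\end{equation}
and this form also gives a self-contained verification of the maximizer.

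The main obstacle is not the algebra, which is a telescoping product. It is justifying that each iterate is the exact minimizer, given that Theorem~\ref{theorem:gvpo} characterizes the population loss $\hat{\mathcal{L}}_{\text{GVPO}}$ over all policies. I will handle this by stating the realizability and exact-optimization assumptions up front. I will then note that the argument is per-prompt, so the expectation over $x\sim\mathcal{D}$ causes no difficulty.
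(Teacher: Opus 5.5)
Your proposal is correct and takes essentially the same route as the paper. Both apply Theorem~\ref{theorem:gvpo} at each step and telescope to $\pi_{\theta_n}\propto\pi_{\theta_0}e^{nR/\beta}$; the paper sums the per-step log-equations and writes the normalizer as $\prod_{i=0}^{n-1}Z_i(x)$, which equals your $\tilde Z_n(x)$. Both then rewrite the objective as $-\frac{\beta}{n}\mathbb{D}_{\text{KL}}[\pi_\theta\|\pi_{\theta_n}]$ plus a constant, and the realizability and exact-optimization assumptions you state explicitly are exactly what the paper's proof uses implicitly.
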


Theorem \ref{theorem:gvpo_alg} characterizes the optimization objective of the $n$-step online GVPO procedure. At each iteration, the policy is updated relative to its immediate predecessor with a KL regularization coefficient $\beta$, while the overall procedure is equivalent to optimizing a reward objective regularized by the KL divergence from the initial policy $\pi_{\theta_0}$ with an effective coefficient $\beta/n$. A larger $\beta$ promotes more stable local updates at each step, while the decreasing effective coefficient $\beta/n$ as $n$ increases allows the final policy to deviate further from the initial policy, providing more room for optimization. The detailed proof is provided in Appendix~\ref{app:proof2}.

\subsection{Comparisons with DPO}
In this section, we discuss the similarities between GVPO and DPO, their key design principles, and the theoretical advantages offered by GVPO.

We begin by analyzing the foundational commonality between GVPO and DPO: both methods integrate the closed-form solution to the KL-constraint reward maximization problem into their training objectives. This integration establishes a direct relationship between the learned policy $\pi_\theta$ and the implicit reward function $R_\theta$, yielding two key advantages:
\begin{itemize}[leftmargin=0.5cm]
    \item It ensures an optimization process that inherently respects the KL divergence constraint, thereby preventing excessive deviation of the policy $\pi_\theta$ from the reference policy $\pi_{ref}$.
    \item It reduces the joint optimization over policies and rewards to a simpler problem focused solely on rewards. The latter is more tractable, as it requires only aligning the implicit rewards $R_\theta(x, y)$ with the true reward function $R(x, y)$.
\end{itemize}

The closed-form solution reveals two critical design principles that also distinguish DPO and GVPO:

\begin{enumerate} [leftmargin=0.5cm]
    \item \textbf{Computational Tractability}: The method must avoid intractable terms such as the partition function $Z(x)$. For instance, a naive loss $\mathcal{L} = \sum (R_\theta(x, y) - R(x, y))^2$ fails because $R_\theta(x, y)$ implicitly depends on $Z(x)$, which is computationally infeasible to estimate. DPO circumvents this by adopting the Bradley-Terry preference model, where $Z(x)$ cancels out in pairwise comparisons. GVPO proposes a novel \textit{zero-sum property} across groups of responses, enabling cancellation of $Z(x)$ in broader multi-sample scenarios.
    
    \item \textbf{Alignment with Desired Optimality}: The loss function must enforce meaningful training. For example, minimizing $\mathcal{L} = \sum \left(\beta \log \frac{\pi_\theta(x, y)}{\pi_{ref}(x, y)} - R(x, y)\right)^2$ yields a suboptimal solution $R_\theta(x, y) = R(x, y) + \beta \log Z(x)$, which deviates from the true reward $R(x, y)$. A well-designed objective must avoid such misalignment. The method should adapt to available supervision. DPO leverages pairwise preference data without explicit rewards, while GVPO generalizes to group-wise responses with reward signals.
\end{enumerate}

\kc{Moreover, GVPO demonstrates stronger theoretical robustness compared to DPO:}
\begin{itemize}[leftmargin=0.5cm]
    \item Prior work \cite{bong2022generalized,hong2024energybasedpreferencemodeloffers} highlights that DPO may fail to align with the optimal policy for the KL-constrained reward maximization problem, because of the inherent flaw of Bradley-Terry model \cite{wu2022asymptoticcomparisonidentifyingconstraints}. This arises because the DPO loss admits multiple minimizers, and its correlation with the true reward objective can diminish during training \cite{tang2024generalized}.
    \item In contrast, as formalized in Theorem~\ref{theorem:gvpo} and Theorem~\ref{corollary:pi_s}, GVPO guarantees that its loss function is aligned with the original constrained optimization problem, ensuring its global optimum. This theoretical robustness positions GVPO as a more reliable method for policy optimization in practice.
\end{itemize}

\subsection{Comparisons with GRPO and Policy Gradient Methods} \label{sec:reg}
Seeing the forest for the trees, we compare GVPO not only with GRPO but also with the broader family of policy gradient-based RL methods.

\textbf{Structural similarities}. We begin by examining their superficial structural similarities. We then establish a regularization perspective of GVPO, decomposing it into distinct regularization terms.
For simplicity, we assume \(\beta = 1\) without loss of generality. Then
\begin{equation}
\label{equation:gvpovsgrpo}
\begin{aligned}
\hat{\mathcal{L}}_{\text{GVPO}}(\theta)\overset{\nabla_\theta}{=}&\mathop{\mathbb{E}}_{x \sim \mathcal{D},y \sim \pi_{s}(\cdot|x)} \left[ (R_\theta(x,y) - \mathbb{E}_y R_\theta(x,y))^2 \right. \\
&\quad \left. - 2(R(x,y) - \mathbb{E}_y R(x,y))R_\theta(x,y) \right] \\
\overset{\nabla_\theta}{=}&\, \mathbb{E}_{x,y} \left[ \mathrm{Var}(\log\pi_\theta) - 2\mathrm{Cov}(\log\pi_\theta,\log\pi_{\theta^\prime}) \right. \\
&\quad \left. - 2(R(x,y) - \mathbb{E}_y R(x,y))\log\pi_\theta(y|x) \right] \\
=& -2\mathbb{E}_{x,y} \left[ (R(x,y) - \mathbb{E}_y R(x,y))\log\pi_\theta(y|x) \right. \\
&\quad \left. + \mathrm{Cov}(\log\pi_\theta,\log\pi_{\theta^\prime}) - 0.5\mathrm{Var}(\log\pi_\theta) \right]
\end{aligned}
\end{equation}
where $Var(\log\pi_\theta)=(\log\pi_\theta(y|x)-\mathbb{E}_y\log\pi_\theta(y|x))^2$ and the covariance term $Cov(\log\pi_\theta,\log\pi_{\theta^\prime})=(\log\pi_\theta(y|x)-\mathbb{E}_y\log\pi_\theta(y|x))(\log\pi_{\theta^\prime}(y|x)-\mathbb{E}_y\log\pi_{\theta^\prime}(y|x))$. 

As shown in Equation~\ref{equation:gvpovsgrpo}, the GVPO loss can be decomposed into three components: \((R(x,y) - \mathbb{E}_y R(x,y))\log\pi_\theta(y|x)\), \(Cov(\log\pi_\theta, \log\pi_{\theta^\prime})\) and \(Var(\log\pi_\theta)\).
\begin{itemize}[leftmargin=0.5cm]
\item the term \((R(x,y) - \mathbb{E}_y R(x,y))\log\pi_\theta(y|x)\) encourages advantage maximization. GRPO directly optimizes the advantage by standardizing reward scores across sampled responses. In contrast, GVPO does not include standard deviation normalization, not as a heuristic design choice but as a consequence of its theoretical formulation. Prior work \cite{liu2025understanding} has also shown that reward normalization based on the standard deviation can introduce bias by conflating prompt-level difficulty with the underlying reward signal.

\item the term \(Cov(\log\pi_\theta, \log\pi_{\theta^\prime})\) serves to constrain deviations of the policy \(\pi_\theta\) from policy $\pi_{\theta^\prime}$, corresponding to \(\mathbb{D}_{\text{KL}}[\pi_\theta||\pi_{\theta^\prime}]\).  When we set $\pi_{\theta^\prime}=\pi_\text{ref}$, this term prevent deviating from a fixed policy $\pi_\text{ref}$. When we set $\pi_{\theta^\prime}=\pi_{\theta_{\text{old}}}$, this term essentially aligns with the trust-region constraint \cite{pmlr-v37-schulman15}, that ensures robustness between policy updates.

\item the term \(Var(\log\pi_\theta)\) is considered alongside the entropy regularization term \(-\mathbb{E}_y\log\pi(y|x)\) \cite{ahmed2019understanding}, which strikes a balance between exploration and exploitation.
\begin{itemize}[leftmargin=0.5cm]
\item Increasing entropy encourages diversity by driving the policy toward a uniform distribution, but risks suppressing the likelihood of high-quality responses. Conversely, reducing entropy concentrates probability mass on a narrow set of outputs, diminishing diversity and potentially inducing entropy collapse. Consequently, entropy regularization proves highly sensitive to its coefficient, complicating practical implementation.
\item In contrast, \(Var(\log\pi_\theta)\) allows the policy to assign zero probability to some responses while maintaining comparable probabilities among the remaining responses, without requiring ad-hoc tuning. Furthermore, the term $R - \overline{R}$ assigns positive advantages to favorable responses and negative advantages to undesirable ones, encouraging the latter toward zero probability while maintaining comparable probabilities among favorable responses.

\item For instance, consider a toy example involving generation over the tokens $a, b, c, d, e$, where $\pi_\theta(a) = \pi_\theta(b) = 0$ and $\pi_\theta(c) = \pi_\theta(d) = \pi_\theta(e) = 1/3$. In this case, $Var(\log(\pi_\theta)) = 0$, indicating minimum variance and thus no penalty on this distribution. In contrast, entropy regularization pushes the distribution toward either the fully uniform or one-hot solution, depending on whether higher or lower entropy is encouraged.

\end{itemize}
\end{itemize}
\textbf{Deeper Connections}. In addition to structural similarities, we now highlight their key theoretical and practical distinctions. Modern policy gradient methods \cite{pmlr-v37-schulman15,schulman2017proximal,shao2024deepseekmath} optimize the expected reward under the current policy $\pi_\theta$ while constraining updates to avoid excessive deviation from the previous policy $\pi_{\theta_{\text{old}}}$. This is typically achieved by optimizing a  objective that combines the reward $R(x,y)$ and a KL-divergence penalty term $\mathbb{D}_{\text{KL}}[\pi_\theta||\pi_{\theta_{\text{old}}}]$, yielding the gradient expression:

\kc{
\begin{equation} 
\begin{aligned}
\label{equation:pg}
&\nabla_\theta[\mathop{\mathbb{E}}_{x,y\sim\pi_\theta(y|x)}[R(x,y)]-\mathbb{D}_{\text{KL}}[\pi_\theta||\pi_{\theta_{\text{old}}}]] \\
= &\nabla_\theta\mathbb{E}_{x} \sum_y \pi_\theta(y|x)(R(x,y)-\log\frac{\pi_\theta(y|x)}{\pi_{\theta_{\text{old}}}(y|x)}) \\
= &\mathop{\mathbb{E}}_{x,y\sim\pi_\theta(y|x)}(R(x,y)-\log\frac{\pi_\theta(y|x)}{\pi_{\theta_{\text{old}}}(y|x)}-1)  \nabla_\theta \log \pi_\theta(y|x)
\end{aligned}
\end{equation}
}
However, estimating this expectation requires on-policy sampling from $\pi_\theta(y|x)$, leading to low sample efficiency---a well-documented limitation of policy gradient methods. Reusing stale samples from prior policies introduces bias, degrading optimization stability and final performance.  

To mitigate this, prior works~\cite{pmlr-v37-schulman15,schulman2017proximal,shao2024deepseekmath} employ importance sampling, rewriting Equation~\ref{equation:pg} as:
\begin{equation} \nonumber
\resizebox{\hsize}{!}{$
\mathop{\mathbb{E}}\limits_{x,y\sim\pi_{\theta_{\text{old}}}}\frac{\pi_\theta(y|x)}{\pi_{\theta_{\text{old}}}(y|x)}\left(R(x,y)-\log\frac{\pi_\theta(y|x)}{\pi_{\theta_{\text{old}}}(y|x)} -1 \right) \nabla_\theta \log \pi_\theta(y|x).
$}
\end{equation}
This allows off-policy gradient estimation using samples from $\pi_{\theta_{\text{old}}}$. However, $\frac{\pi_\theta(y|x)}{\pi_{\theta_{\text{old}}}(y|x)}$ becomes unstable when $\pi_\theta$ deviates significantly from $\pi_{\theta_{\text{old}}}$, risking gradient explosion. Heuristics like gradient clipping \cite{schulman2017proximal} address this at the cost of biased gradient estimates, undermining theoretical guarantees.  

GVPO circumvents these issues because it does not necessitate on-policy sampling in the first place. By rearranging Equation~\ref{equation:pg}, we observe that the policy gradient can be expressed as:
\begin{equation}\nonumber
\begin{aligned}
\mathop{\mathbb{E}}_{x,y\sim\pi_\theta(y|x)}\Big[R(x,y)-\log\frac{\pi_\theta(y|x)}{\pi_{\theta_{\text{old}}}(y|x)} - \mathbb{E}_{y\sim\pi_\theta(y|x)}\big(R(x,y)\\-\log\frac{\pi_\theta(y|x)}{\pi_{\theta_{\text{old}}}(y|x)}\big)\Big] \nabla_\theta \log \pi_\theta(y|x),
\end{aligned}
\end{equation}
where the baseline term (subtracted expectation) arises because $\mathbb{E}_{y\sim\pi_\theta} [c \nabla_\theta \log \pi_\theta(y|x)] = \nabla_\theta c = 0$ for any constant $c$. Crucially, the GVPO gradient generalizes this structure, $\nabla_\theta\hat{\mathcal{L}}_{\text{GVPO}}(\theta) = $
\begin{equation}\nonumber
\begin{aligned}
\mathop{\mathbb{E}}_{x,y\sim\pi_s(y|x)}\Big[R(x,y)-\log\frac{\pi_\theta(y|x)}{\pi_{\theta_{\text{old}}}(y|x)} - \mathbb{E}_{y\sim\pi_s(y|x)}\big(R(x,y)\\-\log\frac{\pi_\theta(y|x)}{\pi_{\theta_{\text{old}}}(y|x)}\big)\Big] \nabla_\theta \log \pi_\theta(y|x),
\end{aligned}
\end{equation}
This reveals that classical policy gradient under trust-region constraint is a special case of GVPO gradient with $\pi_s = \pi_\theta$. As proven in Theorem~\ref{theorem:gvpo} and Theorem~\ref{corollary:pi_s}, GVPO retains the same optimal solution as the policy gradient method while decoupling the sampling distribution $\pi_s$ from the learned policy $\pi_\theta$. GVPO’s decoupling addresses two critical limitations:  
\begin{enumerate} [leftmargin=0.5cm]
    \item \textbf{Sample Efficiency}: Unlike on-policy methods \cite{singh2000convergence, williams1992simple}, GVPO supports off-policy training with reusable or mixed data (e.g., expert demonstrations or historical policies).  
    \item \textbf{Stability}: By avoiding importance sampling weights $\frac{\pi_\theta}{\pi_{\theta_{\text{old}}}}$, GVPO eliminates gradient explosion risks without biased clipping, thereby promoting training reliability in practice.
\end{enumerate}
By synergizing these advantages, GVPO emerges as a competitive online reinforcement learning algorithm capable of leveraging diverse data sources and preserving optimality guarantee—a combination previously unattained in prior policy gradient methods.

\input{}
\clearpage

\begin{figure*}[t]
\centering
\includegraphics[width=0.9\textwidth]{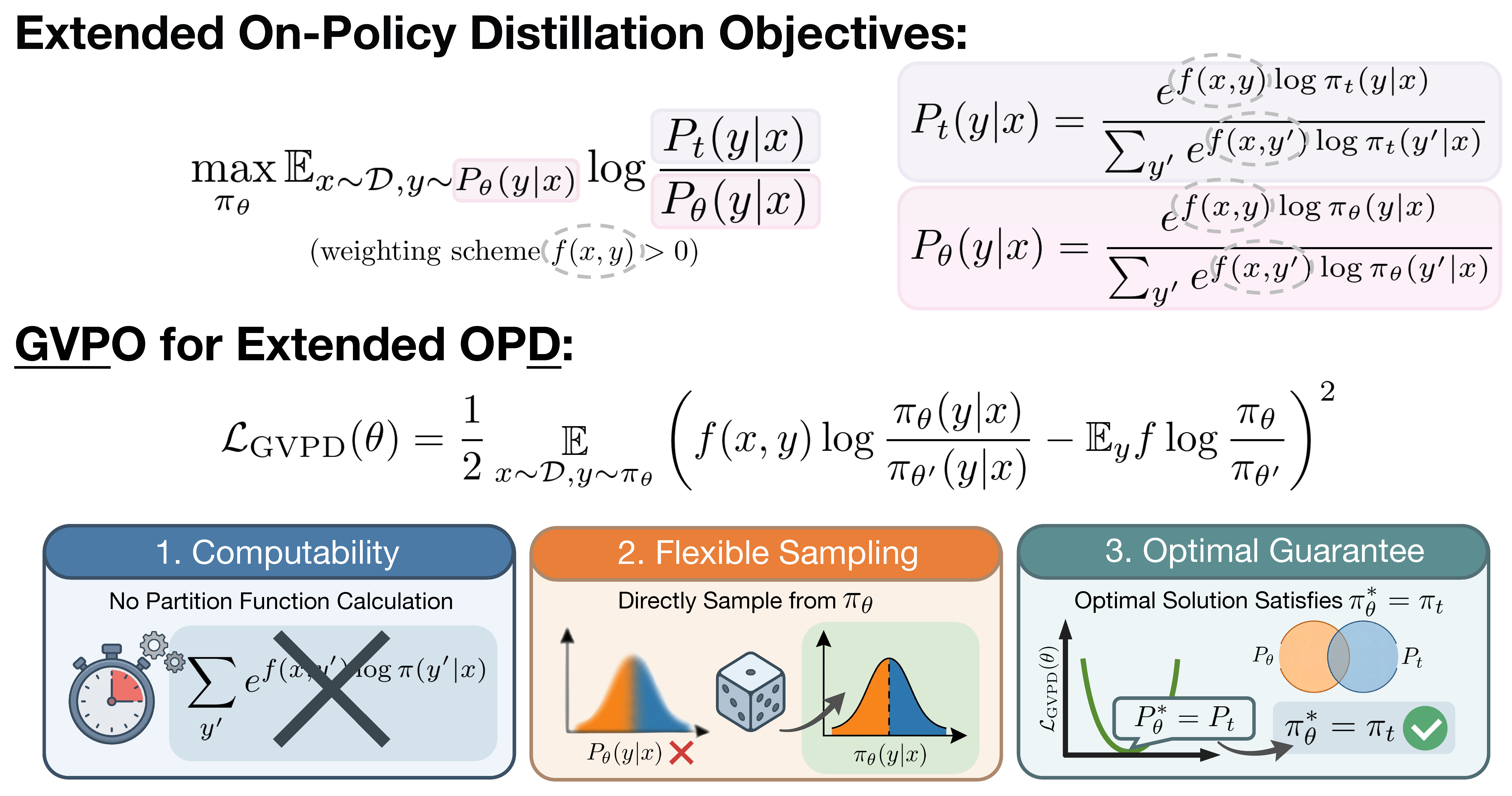}
% \vspace*{-5mm}
\caption{\textbf{GVPO for extended on-policy distillation objectives.} An extended OPD objective assigns a weighting scheme $f(x,y)$ to each $(x,y)$ and minimizes the reverse KL divergence between the induced distributions $P_\theta$ and $P_t$. GVPO optimizes it without computing partition functions, enabling sampling from $\pi_\theta$ beyond $P_\theta$ while ensuring the desired optimum $\pi_\theta^*=\pi_t$.
}
\label{fig:gvpd}
\vspace{-5mm}
\end{figure*}

\section{GVPO for On-Policy Distillation}
\subsection{Method}
On-policy distillation  of large language models can be formulated as minimizing the reverse KL divergence between the student policy $\pi_\theta$ and a teacher policy $\pi_t$:
\begin{equation}
    \min_{\pi_\theta}\mathbb{D}_{KL}[\pi_\theta(y|x)||\pi_t(y|x)].
\end{equation}
Equivalently, this objective can be written as maximizing
\begin{equation} \label{equation:opd}
    \max_{\pi_\theta}\mathbb{E}_{x\sim\mathcal{D},y\sim\pi_\theta(y|x)} \log\frac{\pi_t(y|x)}{\pi_\theta(y|x)}.
\end{equation}
To connect OPD with GVPO, we introduce an arbitrary reference policy $\pi_{\theta'}$. For any response $y$, we have the following decomposition:
\begin{equation}
    \log\frac{\pi_t(y|x)}{\pi_\theta(y|x)} = \log\frac{\pi_t(y|x)}{\pi_{\theta^\prime}(y|x)} - \log\frac{\pi_\theta(y|x)}{\pi_{\theta^\prime}(y|x)}.
\end{equation}

We define the reward as $R(x,y) = \log\frac{\pi_t(y|x)}{\pi_{\theta^\prime}(y|x)}$. Importantly, $R(x,y)$ is independent of the current student policy $\pi_\theta$. Substituting this definition into Equation~\ref{equation:opd} yields the equivalent KL-regularized reinforcement learning objective with $\beta=1$:
\begin{equation}
    \max_{\pi_{\theta}} \mathbb{E}_{x\sim\mathcal{D},y\sim\pi_\theta(y|x)}R(x,y)-\beta\log\frac{\pi_\theta(y|x)}{\pi_{\theta^\prime}(y|x)}.
\end{equation}

Consequently, GVPO can directly optimize the on-policy distillation objective. We denote the resulting loss as GVPD:
\begin{equation}
    \begin{aligned}
    &\mathcal{L}_{\text{GVPD}}(\theta)  \\
    &=\frac{1}{2}\sum_{x, \{y_i\} } \sum_{i=1}^k [(R_\theta(x,y_i)-\overline{R_\theta})-(R(x,y_i)-\overline{R})]^2  \\
    &=\frac{1}{2}\sum_{x, \{y_i\} } \sum_{i=1}^k \Big(\log\frac{\pi_\theta(y|x)}{\pi_t(y|x)}-\overline{\log\frac{\pi_\theta}{\pi_t}}\Big)^2.  
    \end{aligned}
\end{equation}
The second equality follows from $R_\theta(x,y_i)=\log\frac{\pi_\theta(y_i|x)}{\pi_{\theta^\prime}(y_i|x)}$ and $R(x,y_i)=\log\frac{\pi_t(y_i|x)}{\pi_{\theta^\prime}(y_i|x)}$. Thus, the reference policy $\pi_{\theta^\prime}$ introduced during the derivation cancels out, yielding the final objective above.

The above formulation allows the advantages of GVPO to be directly inherited by on-policy distillation. First, the objective admits the teacher policy $\pi_t$ as its exact optimum without any approximation or bias. Second, GVPO accommodates flexible sampling distributions beyond strictly on-policy sampling without requiring importance-sampling corrections. 

Furthermore, GVPO operates at the sequence level rather than requiring token-level alignment between the student and teacher. Consequently, the student and teacher need not share the same vocabulary or tokenization scheme. This substantially broadens its applicability, enabling distillation between models from different model families or series whose tokenizers and vocabularies are incompatible.

\subsection{Extended On-Policy Distillation}
We study a broad family of extended OPD objectives:
\begin{equation} \label{equation:extended_opd}
    \max_{\pi_\theta}\mathbb{E}_{x\sim\mathcal{D},y\sim P_\theta(y|x)} \log\frac{P_t(y|x)}{P_\theta(y|x)}.
\end{equation},
where each response $(x,y)$ is assigned a positive weight $f(x,y)>0$. Specifically, we define $P_t(y|x)$ and $P_\theta(y|x)$ as
\begin{equation}
\begin{aligned}
    &P_t(y|x) = \frac{ e^{f(x,y)\log\pi_t(y|x)} }{ \sum_{y'}e^{f(x,y')\log\pi_t(y'|x)} },\\
    &P_\theta(y|x) = \frac{ e^{f(x,y)\log\pi_\theta(y|x)} }{ \sum_{y'}e^{f(x,y')\log\pi_\theta(y'|x)} }.
\end{aligned}
\end{equation}

Both $P_t(y|x)$ and $P_\theta(y|x)$ are probability distributions induced by the weighting scheme $f(x,y)$. Consequently, the objective in Equation~\ref{equation:extended_opd} is equivalent to minimizing the reverse KL divergence from $P_t(y|x)$ to $P_\theta(y|x)$. When $f(x,y)=1$ for all $(x,y)$, the extended OPD objective reduces to the standard OPD objective (Equation \ref{equation:opd}).

We first establish the following result.
\begin{theorem} \label{theorem:opd1}
If $P_\theta(y\mid x)=P_t(y\mid x)$ for all $x,y$, then
$\pi_\theta(y\mid x)=\pi_t(y\mid x)$ for all $x,y$.
\end{theorem}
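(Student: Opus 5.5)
Fix a prompt $x$ throughout; the argument is pointwise in $x$. The plan is to turn the equality of the induced Boltzmann distributions into a multiplicative relation between $\pi_\theta$ and $\pi_t$ with an unknown scalar. Then I will use normalization of both policies to force that scalar to equal one.

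Write $Z_\theta(x)=\sum_{y'}\pi_\theta(y'|x)^{f(x,y')}$ and $Z_t(x)=\sum_{y'}\pi_t(y'|x)^{f(x,y')}$. Here I read $e^{f\log\pi}=\pi^{f}$, with the convention that this equals $0$ when $\pi=0$, since $f>0$. I assume, as is implicit in the definition of $P_\theta$ and $P_t$, that both sums are positive and finite.

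The hypothesis $P_\theta(y|x)=P_t(y|x)$ for all $y$ gives
\begin{equation*}
\pi_\theta(y|x)^{f(x,y)}=c(x)\,\pi_t(y|x)^{f(x,y)},\qquad c(x)=\frac{Z_\theta(x)}{Z_t(x)}>0 .
\end{equation*}
Taking the $1/f(x,y)$-th power, which is legitimate because $f(x,y)>0$ and both sides are nonnegative, yields
\begin{equation*}
\pi_\theta(y|x)=c(x)^{1/f(x,y)}\,\pi_t(y|x)\quad\text{for every } y.
\end{equation*}
This also shows that the two policies have identical supports.

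The key step is to show $c(x)=1$. Summing the previous identity over $y$ and using that both $\pi_\theta(\cdot|x)$ and $\pi_t(\cdot|x)$ are probability distributions gives
\begin{equation*}
\sum_y c(x)^{1/f(x,y)}\pi_t(y|x)=1=\sum_y\pi_t(y|x).
\end{equation*}
Suppose $c(x)>1$. Then $c(x)^{1/f(x,y)}>1$ for every $y$, since the exponent is positive. Because $\pi_t(\cdot|x)$ places positive mass on at least one $y$, the left-hand side strictly exceeds $1$, which is a contradiction. Symmetrically, $c(x)<1$ makes every factor $c(x)^{1/f(x,y)}$ less than $1$, so the left side is strictly below $1$, again a contradiction. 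Hence $c(x)=1$, and therefore $\pi_\theta(y|x)=\pi_t(y|x)$ for all $y$.

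The main obstacle is exactly this step. Because the exponent $1/f(x,y)$ varies with $y$, one cannot simply factor $c(x)$ out of the sum. The fix is the monotonicity argument above: the sign of $c(x)-1$ is uniform across all terms, so the inequality holds termwise. The remaining care is bookkeeping. Responses with zero probability contribute zero on both sides. When the response space is countably infinite, the sums converge because they are dominated by $\max(1,c(x))\sum_y\pi_t(y|x)$.
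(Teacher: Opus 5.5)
Your proof is correct and follows essentially the paper's argument. The paper likewise derives $\pi_\theta=(Z_\theta/Z_t)^{1/f}\pi_t$ and forces the scalar to one by monotonicity: it shows that $H_x(C)=\sum_y\pi_t(y\mid x)e^{C/f(x,y)}$ is strictly increasing by differentiating, with $H_x(0)=1$, whereas your termwise sign comparison does the same job without calculus. One small correction: your closing bound $c^{1/f}\le\max(1,c)$ fails when $c>1$ and $f<1$ (e.g.\ the paper's own $f=1/|y|^\alpha$). It is unnecessary, though, since the sum in question is literally $\sum_y\pi_\theta(y\mid x)=1$.
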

\begin{proof}
For each $x$, define the partition functions $Z_t(x)=\sum_{y'}e^{f(x,y')\log\pi_t(y'|x)}$ and $Z_\theta(x)=\sum_{y'}e^{f(x,y')\log\pi_\theta(y'|x)}$.

By the assumption $P_\theta(y\mid x)=P_t(y\mid x)$, we have
\[
\pi_\theta(y\mid x)
=
\pi_t(y\mid x)
\left(\frac{Z_\theta(x)}{Z_t(x)}\right)^{1/f(x,y)}.
\]
Since $\sum_y\pi_\theta(y\mid x)=1$, it follows that
\[
1
=
\sum_y\pi_t(y\mid x)
\left(\frac{Z_\theta(x)}{Z_t(x)}\right)^{1/f(x,y)}.
\]
\[
\text{Let } C_x=\log\frac{Z_\theta(x)}{Z_t(x)}
\quad\text{and}\quad
H_x(C)=\sum_y\pi_t(y\mid x)e^{C/f(x,y)}.
\]
Then $H_x(C_x)=1$. Moreover, since $f(x,y)>0$,
\[
H_x'(C)
=
\sum_y
\frac{\pi_t(y\mid x)}{f(x,y)}
e^{C/f(x,y)}
>0.
\]
Thus $H_x$ is strictly increasing. Since
$H_x(0)=\sum_y\pi_t(y\mid x)=1$,
the equation $H_x(C_x)=1$ has the unique solution $C_x=0$. Therefore,
$Z_\theta(x)=Z_t(x)$,
and consequently
\[
\pi_\theta(y\mid x)
=
\pi_t(y\mid x)
\left(\frac{Z_\theta(x)}{Z_t(x)}\right)^{1/f(x,y)}
=
\pi_t(y\mid x), \quad \forall x,y.
\]
\end{proof}

Theorem~\ref{theorem:opd1} establishes that optimizing the objective in Equation~\ref{equation:extended_opd} is consistent with the OPD objective, namely, recovering $\pi_\theta=\pi_t$. However, directly optimizing this objective presents two practical challenges. First, evaluating the partition functions $Z_t(x)$ and $Z_\theta(x)$ requires summing over the entire response space, making $P_t$ and $P_\theta$ difficult to estimate in practice. Second, to the best of our knowledge, on-policy sampling from $P_\theta$ is not feasible under the standard autoregressive decoding procedure of LLMs. In particular, sampling from $P_\theta$ would require access to all response-level probabilities $P_\theta(y\mid x)$ in advance, which is computationally intractable.

We next show that GVPO can optimize the extended reverse KL objective:
\begin{equation}
    \begin{aligned}
    &\mathcal{L}_{\text{GVPD}}(\theta)  \\
    &=\frac{1}{2}\sum_{x, \{y_i\} } \sum_{i=1}^k \Big(\log\frac{P_\theta(y|x)}{P_t(y|x)}-\overline{\log\frac{P_\theta}{P_t}}\Big)^2 \\
    &=\frac{1}{2}\sum_{x, \{y_i\} } \sum_{i=1}^k \Big(f(x,y)\log\frac{\pi_\theta(y|x)}{\pi_t(y|x)}-\overline{f\log\frac{\pi_\theta}{\pi_t}}\Big)^2.  
    \end{aligned}
\end{equation}
The second step holds because $\log\frac{P_\theta(y|x)}{P_t(y|x)}=f(x,y)\log\frac{\pi_\theta(y|x)}{\pi_t(y|x)}-\log\frac{Z_\theta(x)}{Z_t(x)}$ and the partition term $\log\frac{Z_\theta(x)}{Z_t(x)}$ cancels when subtracting the group average. This cancellation provides another manifestation of the key insight underlying GVPO:  \textit{when the sum of assigned response weights within a prompt group equals zero, the partition function becomes invariant across responses.}

Furthermore, GVPO supports sampling from a broad class of distributions, eliminating the need to sample directly from $P_\theta$ when optimizing the extended OPD objective. In particular, we can sample responses from $\pi_\theta$ or from the previous policy $\pi_{\theta_{\mathrm{old}}}$ while retaining the theoretical guarantee established above. Thus, GVPO provides a practical way to optimize the extended OPD objective while avoiding both the intractable partition-function computation and the infeasible on-policy sampling procedure associated with $P_\theta$.

\subsection{Example} \label{sec:opd_example}
To motivate the use of the extended OPD objective, we provide a concrete example of a weighting scheme $f$.

In the standard OPD objective, a token $y_t$ is generated on-policy from $\pi_\theta(\cdot \mid x,y_{<t})$. Compared with the teacher policy $\pi_t$, the source policy $\pi_\theta$ may generally assign a higher probability to the generated token $y_t$. That is, it is possible that
$\pi_\theta(y|x_t,y_{<t})>\pi_t(y_t|x,y_{<t})$ and consequently, $\log\frac{\pi_\theta(y_t|x,y_{<t})}{\pi_t(y_t|x,y_{<t})}<0$.

This effect can introduce a length bias into the optimization. As the response becomes longer, the cumulative log-ratio $\log\frac{\pi_\theta(y|x)}{\pi_t(y|x)}$ can keep decreasing. Consequently, longer responses may receive smaller gradient weights. This can create a feedback loop in which the average response length of $\pi_\theta$ continually decreases during optimization, eventually leading the model to produce very short responses that lack sufficient information as illustrated in Figure~\ref{fig:length_plot}.

To mitigate this limitation of the standard OPD objective, we consider the weighting scheme
$f(x,y)=\frac{1}{|y|^\alpha}$ where $|y|$ denotes the response length and $\alpha$ is a hyperparameter controlling the degree of length normalization. Intuitively, this weighting scheme counteracts the inherent length bias of the standard OPD objective by explicitly normalizing the contribution of a response according to its length.

More broadly, GVPO enables the heuristic design of a broader family of extended OPD objectives. In particular, one can design weighting schemes that suppress or promote specific undesirable or desirable response patterns during distillation, while retaining the theoretical guarantee that the teacher policy $\pi_t$ is its exact optimization destination.

\begin{figure}[b]
\vspace{-5mm}
\centering
\includegraphics[width=0.4\textwidth]{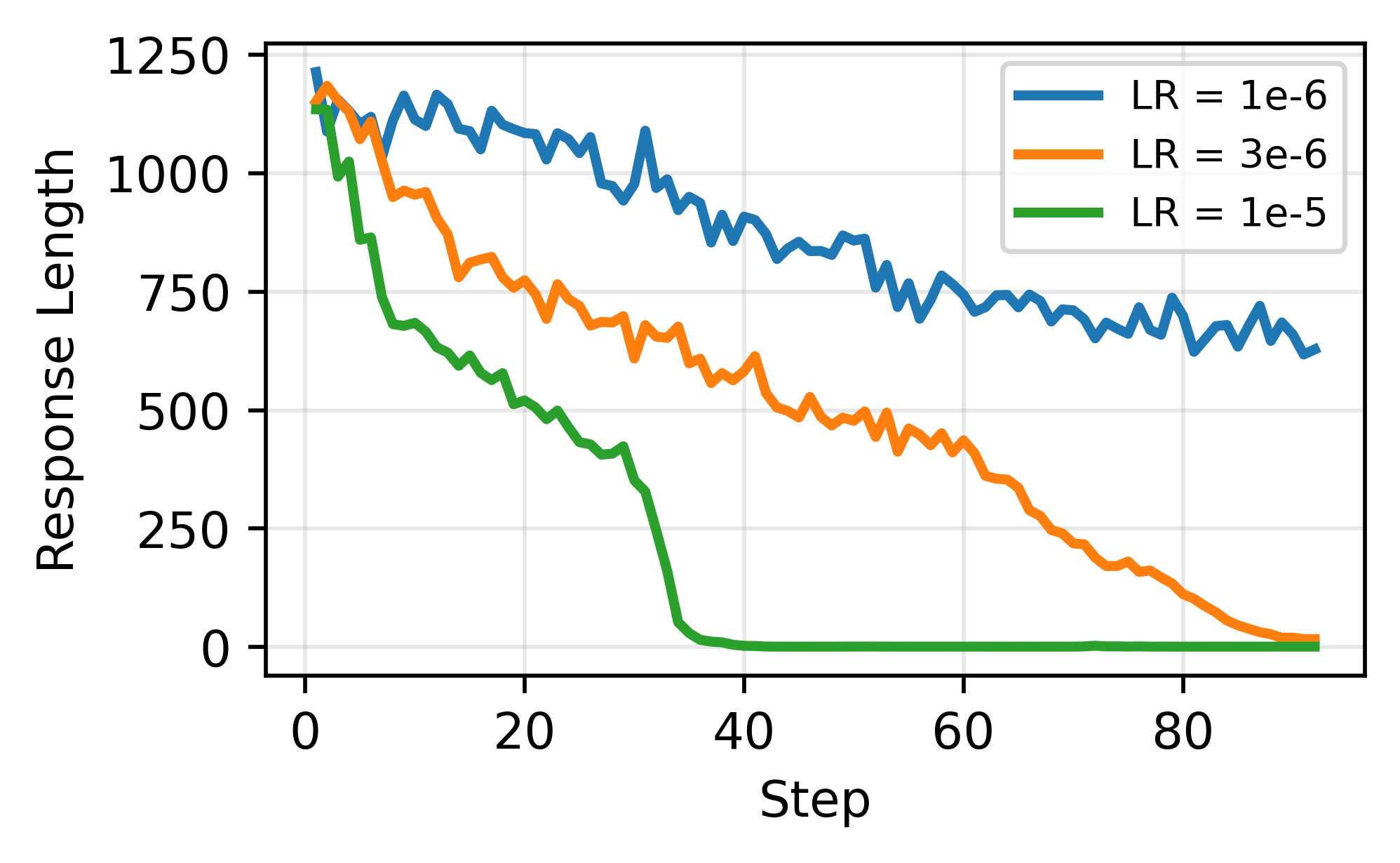}
\vspace*{-3mm}
\caption{
\textbf{OPD’s response length continually decreases under policy-gradient training across different learning rates.}
}
\label{fig:length_plot}
\vspace{-5mm}
\end{figure}
\newpage
\begin{table*}[htbp]
\centering
\caption{Performance comparison of post-training algorithms on Qwen2.5-Math-7B across mathematical reasoning benchmarks, along with key theoretical properties, including whether each method provides the optimal-solution guarantee and whether it avoids the use of importance sampling for off-policy training.}
\label{tab:results}
\begin{tabular}{lcccccccc}
\toprule
Algorithm & AIME2024 & AMC & MATH500 & Minerva & Olympiadbench & Average & \begin{tabular}[c]{@{}c@{}}Optimal Solution\\ Guarantee\end{tabular} & \begin{tabular}[c]{@{}c@{}}No IS Weight \\ for Off-policy\end{tabular} \\
\midrule
Qwen2.5-Math-7B & 14.68 & 38.55 & 64.00 & 27.20 & 30.66 & 35.02 & & \\
\quad+GRPO     & 14.79 & 55.42 & 80.00 & 41.17 & 42.07 & 46.69 & $\times$ & $\times$ \\
\quad+Dr.GRPO  & 16.56 & 48.19 & 81.20 & 44.48 & 43.40 & 46.77 & $\times$ & $\times$ \\
\quad+Remax    & 17.19 & 60.24 & 82.00 & 40.44 & 45.19 & 49.01 & $\times$ & $\times$ \\
\quad+Reinforce++ & 16.67 & 54.22 & 80.40 & 43.01 & 41.78 & 47.22 & $\times$ & $\times$ \\
\quad+GSPO     & 18.33 & 61.44 & 81.60 & 40.44 & 44.74 & 49.31 & $\times$ & $\times$ \\
\quad+GMPO     & 19.58 & 50.60 & 82.20 & 38.97 & 46.07 & 47.48 & $\times$ & $\times$ \\
\quad+GPG     & 13.54 & 44.57 & 77.80 & 43.01 & 42.81 & 44.34 & $\times$ & $\times$ \\
\quad+Cispo     & 15.41 & 50.60 & 80.00 & 41.17 & 45.18 & 46.47 & $\times$ & $\times$ \\

\quad+\textbf{GVPO} & \textbf{20.72} & \textbf{62.65} & \textbf{83.80} & \textbf{45.95} & \textbf{46.96} & \textbf{52.02} & \checkmark & \checkmark \\
\bottomrule
\end{tabular}
\vspace{-4.5mm}
\end{table*}

\begin{figure*}[bh!]
  \centering
  \begin{minipage}[b]{0.32\textwidth}
    \centering
    \begin{tikzpicture}
\begin{axis}[
    xlabel=$\beta$,
    ylabel=Accuracy,
    xmode=log,
    log ticks with fixed point,
    xtick={0.01,0.05,0.1,0.2,0.5},
    xmin=0.008, xmax=0.6,
    ymin=0, ymax=1.0,
    grid=both,
    legend style={},
    ticklabel style = {font=\tiny},
    width=5.6cm,
    height=4.5cm,
]

% AIME2024
\addplot+[smooth, black, mark=*] coordinates {
    (0.01, 0.2041)
    (0.05, 0.2156)
    (0.1, 0.2072)
    (0.2, 0.1906)
    (0.5, 0.2010)
};
% \addlegendentry{Dataset A}

% AMC
\addplot+[smooth, black, mark=square*] coordinates {
    (0.01, 0.5542)
    (0.05, 0.5783)
    (0.1, 0.6265)
    (0.2, 0.6385)
    (0.5, 0.5783)
};
% \addlegendentry{Dataset B}

% MATH500
\addplot+[smooth, black, mark=triangle*] coordinates {
    (0.01, 0.8240)
    (0.05, 0.8420)
    (0.1, 0.8380)
    (0.2, 0.82)
    (0.5, 0.8420)
};
% \addlegendentry{Dataset C}

% Minerva
\addplot+[smooth, black, mark=diamond*] coordinates {
    (0.01, 0.4448)
    (0.05, 0.4301)
    (0.1, 0.4595)
    (0.2, 0.4264)
    (0.5, 0.4264)
};
% \addlegendentry{Dataset D}

% Olympiadbench
\addplot+[smooth, black, mark=pentagon*] coordinates {
    (0.01, 0.4370)
    (0.05, 0.4725)
    (0.1, 0.4696)
    (0.2, 0.4559)
    (0.5, 0.4607)
};
% \addlegendentry{Dataset E}

\end{axis}
\end{tikzpicture}
    \caption{Ablation on $\beta$. Each line represents a dataset.}
    \label{fig:beta}
  \end{minipage}
  \hfill
  \begin{minipage}[b]{0.32\textwidth}
    \centering
    \begin{tikzpicture}
  \begin{groupplot}[
    group style={
      group size=1 by 2,
      vertical sep=0.1cm,
      group name=myplots,
      x descriptions at=edge bottom  % 仅最后一个子图显示x轴描述
    },
    xmode=log,
    log basis x=2,
    width=5.6cm, 
    height=3.0cm,
    grid=major,
    xtick={2,4,8,16,32},
    legend columns=2,
    legend style={
      at={([yshift=-15pt]myplots c1r2.south)},
      anchor=north
    },
    ticklabel style = {font=\tiny},
    legend to name=commonlegend
    ]
    
    %%% 第一个子图 (数据集1) %%%
    \nextgroupplot[
      % xtick=\empty,        % 取消x轴刻度
      xlabel=,             % 取消x轴标签
      ymin=0.74, ymax=0.8  % 建议设置相同的y轴范围
    ]
    \addplot+[solid, blue, mark=*] coordinates {
      (2, 0.7500) (4, 0.78) (8, 0.778) (16, 0.784) (32, 0.792)
    };
    \addlegendentry{GVPO}
    
    \addplot+[densely dashed, red, mark=square*] coordinates {
      (2, 0.7420) (4, 0.7520) (8, 0.7600) (16, 0.7620) (32, 0.7660)
    };
    \addlegendentry{GRPO}

    %%% 第二个子图 (数据集2) %%%
    \nextgroupplot[
      ymin=0.1, ymax=0.21  % 保持相同y轴范围
      ,xlabel=$k$
    ]
    \addplot+[solid, blue, mark=*] coordinates {
      (2, 0.1395) (4, 0.1468) (8, 0.1656) (16, 0.1697) (32, 0.202)
    };
    
    \addplot+[densely dashed, red, mark=square*] coordinates {
      (2, 0.1135) (4, 0.1239) (8, 0.1531) (16, 0.1645) (32, 0.1697)
    };

  \end{groupplot}

  % 共用Y轴标签
  \node[rotate=90, anchor=center] at ([xshift=-2mm,yshift=-6mm]myplots c1r1.outer west) {Accuracy};
  
  % 显示共用图例
  \node[anchor=north] at (myplots c1r2.south) {\pgfplotslegendfromname{commonlegend}};
\end{tikzpicture}
    \caption{Ablation on $k$. Blue line: \textcolor{blue}{GVPO}; Red line: \textcolor{red}{GRPO}.}
    \label{fig:k}
  \end{minipage}
  \hfill
  \begin{minipage}[b]{0.32\textwidth}
    \centering
    \begin{tikzpicture}
\begin{axis}[
    xlabel=\#Traning Steps,
    xlabel style = {font=\small},
    ylabel=Accuracy,
    log ticks with fixed point,
    xtick={0,20,40,60,80},
    xmin=-10, xmax=90,
    ymin=0.1, ymax=0.9,
    grid=both,
    legend style={font=\tiny},
    ticklabel style = {font=\tiny},
    width=5.6cm,
    height=4.5cm,
    legend pos = south east,
    legend columns=2, 
]

\addplot+[smooth, blue, mark=o] coordinates {
(0, 0.424)
(10, 0.686)
(20, 0.724)
(30, 0.736)
(40, 0.736)
(50, 0.758)
(60, 0.734)
(70, 0.724)
(80, 0.754)
};
\addlegendentry{4:4}

\addplot+[smooth, red, mark=o] coordinates {
(0, 0.424)
(10, 0.684)
(20, 0.724)
(30, 0.74)
(40, 0.742)
(50, 0.756)
(60, 0.75)
(70, 0.73)
(80, 0.742)
};
\addlegendentry{3:5}

\addplot+[smooth, orange, mark=o] coordinates {
(0, 0.424)
(10, 0.688)
(20, 0.712)
(30, 0.738)
(40, 0.742)
(50, 0.762)
(60, 0.75)
(70, 0.742)
(80, 0.744)
};
\addlegendentry{2:6}

\addplot+[smooth, cyan, mark=o] coordinates {
(0, 0.424)
(10, 0.7)
(20, 0.716)
(30, 0.754)
(40, 0.744)
(50, 0.77)
(60, 0.75)
(70, 0.78)
(80, 0.77)
};
\addlegendentry{1:7}

\addplot+[smooth, olive, mark=o] coordinates {
(0, 0.424)
(10, 0.696)
(20, 0.702)
(30, 0.732)
(40, 0.738)
(50, 0.756)
(60, 0.756)
(70, 0.768)
(80, 0.752)
};
\addlegendentry{0:8}

\end{axis}
\end{tikzpicture}
    \caption{Ablation on $\pi_s$. \#(historical $y$) : \#($y$ from $\pi_{\theta_{\text{old}}}$)}
    \label{fig:pi_s}
  \end{minipage}
\end{figure*}
\section{Post-training Experiments}
\subsection{Experimental Setup}
Following the established experimental setting of GRPO, we conduct a comprehensive evaluation on math reasoning. Specifically, we post-train the Qwen2.5-Math-7B model on Competition Math \cite{hendrycksmath2021} and assess performance on AIME2024 \cite{li2024numinamath}, AMC \cite{li2024numinamath}, Math500 \cite{lightman2023lets}, Minerva \cite{lewkowycz2022solving}, and OlympiadBench \cite{he2024olympiadbench}. For answer verification, we utilize the xVerify framework \cite{chen2025xverifyefficientanswerverifier}. We adopt $pass@1$ accuracy for all benchmarks except AIME2024, where we report $avg@32$ accuracy to account for its limited size (30 problems) and high difficulty.

We compare GVPO with a wide range of baselines, including GRPO \cite{shao2024deepseekmath}, Dr.GRPO \cite{liu2025understanding}, Remax \cite{li2023remax}, Reinforce++ \cite{hu2025reinforce++}, GSPO \cite{zheng2025groupsequencepolicyoptimization}, GMPO \cite{zhao2025geometricmeanpolicyoptimization}, GPG \cite{chu2025gpg} and Cispo \cite{minimax2025minimaxm1scalingtesttimecompute}. To ensure a fair comparison, we maintain identical experimental settings while only modifying the algorithmic component. All experiments use a same sampling budget of generating $k=5$ responses per prompt.

For each step, we sample $1024$ prompts from the training set and set the mini-batch size in each step to $256$. We repeat the whole training set for 10 epochs and set the warm-up ratio to $5\%$. We grid-search the learning rate in $\{5e-7,\ 1e-6, 5e-6, 1e-5\}$ and find $5e-6$ to be the best setting. For GVPO, we employ $\beta=0.1$. We conduct the main experiment using Qwen2.5-Math-7B on top of Deepseek-R1-like chat template as in \cite{hu2025openreasonerzeroopensourceapproach}. In the ablation experiments, we use Qwen2.5-Math-1.5B on top of Qwen chat template for faster training and GPU memory utilization.

We adopt the verl framework \cite{sheng2024hybridflow} for training. The code implementation of GVPO has been integrated into the verl official GitHub repository. We also show the core implementation of GVPO in Appendix~\ref{app:code}. We conduct our experiments using a server with eight 80GB H800 GPU cards. For Qwen2.5-Math-7B experiments with $k=5$, it takes approximately 12 hours per experiment. For Qwen2.5-Math-1.5B experiments with $k=8$, it takes approximately 8 hours per experiment.

\subsection{Experimental Results}
\textbf{Main Result.} Table~\ref{tab:results} summarizes the main experimental results, comparing different post-training algorithms on Qwen2.5-Math-7B across a range of mathematical reasoning benchmarks. GVPO demonstrates strong performance across the evaluated datasets, achieving competitive results overall.

Beyond its empirical performance, Table~\ref{tab:results} also highlights the key advantages of GVPO. In particular, GVPO guarantees the unique optimal solution and enables off-policy training without requiring importance sampling. To the best of our knowledge, other baselines do not provide these desirable properties. 

The strong empirical performance of GVPO is consistent with our theoretical analysis. Together, the theoretical guarantees and experimental results establish GVPO as a reliable and versatile paradigm for LLM post-training.

\textbf{Ablation on $\beta$.}  Figure~\ref{fig:beta} analyzes the sensitivity of GVPO to variations in $\beta$, with each line corresponding to a different evaluated dataset. The results demonstrate little fluctuation in performance across $\beta$, suggesting GVPO exhibits robustness to this hyperparameter. This stability may reduce the need for exhaustive tuning and enhance its practical utility.

\textbf{Ablation on $k$.} Figure~\ref{fig:k} examines how GVPO scales with the number of samplings $k$, evaluated on Qwen2.5-Math-1.5B. Top and bottom panels show results for MATH500 and AIME2024 respectively. GVPO consistently outperforms GRPO across all $k$ and demonstrates superior scalability. Notably, GVPO matches the AIME2024 performance of a 7B model on the 1.5B architecture through increased $k$, demonstrating its potential to achieve better performance through increased sampling, without requiring larger model sizes.

\textbf{Ablation on $\pi_s$.} Figure~\ref{fig:pi_s} investigates GVPO's versatility on sampling distributions, evaluated on Qwen2.5-Math-1.5B and MATH500. We propose a heuristic $\pi_s$ that mixes responses from $\pi_{\theta_{\text{old}}}$ with historical responses. Specifically, each line corresponds to a different portfolio, where, for example, a ratio of $2{:}6$ indicates that two historical responses are combined with six responses from $\pi_{\theta_{\text{old}}}$. The results demonstrate GVPO’s robust performance across proportions, highlighting: (1) incorporating historical responses into $\pi_s$ can reduce the training's sampling cost, and (2) the flexibility of $\pi_s$ suggests that GVPO provides a promising bridge between modern LLM training and classical RL research on experience replay.

\begin{table*}[htbp]
\centering
\caption{Ablation of GVPO Regularization Terms}
\label{tab:reg}
\begin{tabular}{lccccc}
\toprule
Algorithm & AIME24 & AMC & MATH500 & Minerva & Olympiadbench \\
\midrule
GVPO     & 20.72 & 62.65 & 83.80 & 45.95 & 46.96  \\
GVPO - Var     & 0.00 & 0.00 & 0.00 & 0.00 & 0.00  \\
GVPO - Cov     & 0.00 & 0.00 & 0.00 & 0.00 & 0.00  \\
GVPO - Var - Cov     & 7.19 & 39.76 & 73.00 & 34.93 & 35.70  \\
\midrule
GVPO - Var + Entropy     & 0.00 & 0.00 & 0.00 & 0.00 & 0.00  \\
GVPO - Var + Entropy (LR=1e-6)& 3.02 & 39.76 & 33.20 & 23.16 & 8.89  \\
\bottomrule
\end{tabular}
\end{table*}

\begin{table*}[htbp]
\centering
\caption{Algorithm Performance Comparison across Different Random Seeds}
\label{tab:seed}
\begin{tabular}{lccccc}
\toprule
Algorithm & AIME2024 & AMC & MATH500 & Minerva & Olympiadbench \\
\midrule
GRPO     & 13.59$\pm$1.11 & 44.34$\pm$2.19 & 76.30$\pm$1.35 & 35.40$\pm$1.04 & 38.65$\pm$0.74  \\
\textbf{GVPO}     & \textbf{15.56$\pm$1.05} & \textbf{45.78$\pm$2.05} & \textbf{77.36$\pm$0.98} & \textbf{36.03$\pm$1.15} & \textbf{39.58$\pm$1.08} \\
\bottomrule
\end{tabular}
\vspace{-3mm}
\end{table*}

\begin{table*}[b]
\centering
\caption{Algorithm Performance Comparison on Reasoning Gym across Multiple Domains}
\label{tab:gym}
\begin{tabular}{lccccc}
\toprule
Algorithm & Algorithmic Thinking & Constraint Satisfaction & Logical Reasoning & Mathematical Domains & Pattern Recognition \\
\midrule
Qwen2.5-1.5B-Instruct & 7.00 & 1.24 & 24.17 & 22.73 & 13.90  \\
\quad+GRPO     & 27.95 & 11.90 & 61.83 & 37.52 & 25.65  \\
\quad+\textbf{GVPO}     & \textbf{38.38} & \textbf{17.83} & \textbf{70.72} & \textbf{63.90} & \textbf{30.07} \\
\bottomrule
\end{tabular}
\end{table*}

\textbf{Ablation on Regularization Terms.} 
Our regularization perspective decomposes GVPO into implicit regularization terms. Table~\ref{tab:reg} presents the ablation results for these regularization terms. We first remove $Var(\log\pi_\theta)$ and $Cov(\log\pi_\theta, \log\pi_{\theta^\prime})$ individually. In both cases, the model fails to converge and generates incoherent outputs, indicating that each term plays a crucial role in stabilizing training. When both regularization terms are removed simultaneously—reducing the objective to $R - \overline{R}$—the model initially converges but diverges after approximately 10\% of the training, further confirming that these regularization components are essential to GVPO’s stability.

\textbf{Study on $Var(\log\pi_\theta)$.} We further investigate the role of $Var(\log\pi_\theta)$ by replacing it with an entropy regularization term. As shown in Table~\ref{tab:reg}, the model again fails to converge and produces incoherent outputs. Lowering the learning rate to $1\text{e}{-6}$ improves stability marginally, yet substituting it with entropy regularization still lead to divergence or suboptimal performance. These findings suggest that although entropy regularization serves a similar stabilizing purpose, it cannot fully substitute $Var(\log\pi_\theta)$. The degradation likely arises from entropy regularization being either too weak—insufficient to suppress extreme updates—or too strong—impeding optimization. This underscores a key limitation of entropy regularization: its sensitivity to coefficient tuning. In contrast, the coefficient for $Var(\log\pi_\theta)$ in GVPO is derived analytically via the optimal solution theorem, eliminating the need for manual tuning and enhancing overall robustness.

\textbf{Robustness Check with Different Random Seeds.} We conducted additional experiments using 10 random seeds for both methods on Qwen2.5-Math-1.5B. The results in Table~\ref{tab:seed} show that GVPO consistently outperforms GRPO in overall performance while exhibiting comparable standard deviations, indicating stable results across different runs.

\begin{table}[htbp]
\vspace{-1mm}
\centering
\caption{Performance Comparison on Llama-3.1-8B-Instruct}
\label{tab:base}
\begin{tabular}{lccccc}
\toprule
Algorithm & AIME24 & AMC & MATH500 & Minerva & Olympiad \\
\midrule
Llama & 5.00 & 19.27 & 50.40 & 22.42 & 17.04  \\
\quad+GRPO     & 8.54 & 19.27 & 52.60 & 25.37 & 16.15  \\
\quad+\textbf{GVPO}     & \textbf{11.56} & \textbf{20.48} & \textbf{56.60} & \textbf{29.41} & \textbf{20.59} \\
\bottomrule
\end{tabular}
\vspace{-1mm}
\end{table}

\textbf{Robustness Check with a Different Foundation Model.} To further assess generalization, we repeated the experiments using Llama-3.1-8B-Instruct as the foundation model. As shown in Table~\ref{tab:base}, GVPO again outperforms GRPO, reaffirming the robustness and effectiveness of our approach.

\textbf{Robustness Check with Multiple Domains.} We further validate GVPO on Reasoning Gym \cite{stojanovski2025reasoninggymreasoningenvironments}, a diverse benchmark spanning over 100 tasks across multiple domains. As shown in Table~\ref{tab:gym} and Appendix~\ref{app:gym}, GVPO consistently delivers clear improvements, providing further evidence of its general applicability. We additionally evaluate GVPO on an alignment-oriented summarization task, with results reported in Appendix~\ref{app:sum}.

\newpage
\begin{table*}[htbp]
\centering
\caption{Performance comparison of on-policy distillation algorithms across mathematical reasoning benchmarks.}
\label{tab:opd_results}
\begin{tabular}{lcccccc}
\toprule
Algorithm & AIME2024 & AMC & MATH500 & Minerva & Olympiadbench & Average \\
\midrule
\multicolumn{7}{c}{\textit{Teacher: DeepSeek-R1-Distill-Llama-8B}} \\
SFT  & 1.66 & 15.66 & 49.40 & 15.44 & 15.70 & 19.57 \\
Seqkd  & 1.45 & 16.86 & 45.40 & 15.44 & 16.14 & 19.06 \\
PG  & 7.29 & 27.71 & 49.20 & 16.17 & 23.40 & 24.75 \\
\textbf{GVPO} & \textbf{9.06} & \textbf{39.75} & \textbf{68.40} & \textbf{29.41} & \textbf{33.33} & \textbf{35.99} \\
\midrule
\multicolumn{7}{c}{\textit{Teacher: DeepSeek-R1-Distill-Qwen-7B}} \\
SFT  & 1.56 & 14.45 & 48.60 & 18.75 & 18.66 & 20.40 \\
Seqkd  & 1.14 & 18.07 & 48.40 & 19.48 & 16.74 & 20.77 \\
PG  & 7.60 & 33.73 & 52.20 & 20.58 & 24.88 & 27.80 \\
SupervisedKD  & 1.25 & 10.84 & 42.60 & 8.82 & 14.81 & 15.66 \\
GKD  & 5.20 & 22.89 & 59.60 & 18.38 & 20.29 & 25.27 \\
Minillm  & 7.18 & 24.09 & 55.00 & 20.22 & 24.59 & 26.22 \\
\textbf{GVPO} & \textbf{8.95} & \textbf{34.93} & \textbf{70.20} & \textbf{30.51} & \textbf{32.14} & \textbf{35.35} \\
\bottomrule
\end{tabular}
\end{table*}

\newpage
\section{On-Policy Distillation Experiments}
\subsection{Experimental Setup}
In accord with the general post-training experiments, we conduct OPD experiments on math reasoning, utilizing Competition Math dataset for training and AIME24, AMC, Math500, Minerva, and OlympiadBench for evaluation.

We adopt Qwen2.5-Math-1.5B as the student model. For teacher models, we adopt DeepSeek-R1-Distill-Llama-8B and DeepSeek-R1-Distill-Qwen-7B, corresponding the different-vocabulary scenario and same-vocabulary scenario respectively.

We use verl as our training framework. We adopt the on-policy sampling setting, where we sample $256$ prompts and set the mini-batch size in each step to $256$ for each step. The training lasts for 2 epoch. For each method, we grid-search and select the optimal learning rate in $\{1e-6, 3e-6, 1e-5\}$. All experiments generate $k = 4$ responses per prompt.

As GVPO supports a broad family of the extended OPD objectives, we adopt the weighting scheme $f(x,y)=\frac{1}{|y|^\alpha}$ illustrated in Section~\ref{sec:opd_example} to facilitate concrete experiments. For DeepSeek-R1-Distill-Llama-8B, we adopt $\alpha=1.0$; for DeepSeek-R1-Distill-Qwen-7B, we adopt $\alpha=0.75$.

\subsection{Experimental Results}

\textbf{Main Result.} In the different-vocabulary scenario, we compare GVPO with SFT, SeqKD \cite{kim2016sequence}, and policy gradient. In the same-vocabulary scenario, we additionally compare it with SupervisedKD \cite{hinton2015distilling}, GKD \cite{agarwal2024policy}, and MiniLLM \cite{gu2024minillm}.

Table~\ref{tab:opd_results} presents the experimental results. GVPO consistently achieves the best performance across datasets in both scenarios, with improvements up to $10\%$ and even more.

The advantage of GVPO in the different-vocabulary scenario demonstrates its ability to perform distillation between models from different model families or series whose tokenizers and vocabularies are incompatible. Its advantage in the same-vocabulary scenario further demonstrates that GVPO can achieve superior performance even when compared with methods that assume a shared vocabulary.

We emphasize that GVPO should not be understood simply as adopting $f(x,y)=\frac{1}{|y|^\alpha}$, despite the strong empirical performance of this particular weighting scheme. Rather, this weighting scheme serves as a proof of concept for the extended OPD objectives. It demonstrates the potential of designing heuristic objectives beyond the standard reverse KL divergence, while GVPO provides a practical framework for optimizing such objectives with theoretical guarantees.

\textbf{Ablation on $\alpha$.} We further investigate the weighting scheme studied in this paper by analyzing the effect of the length normalization hyperparameter $\alpha$. Figure~\ref{fig:alpha_accuracy} presents the performance of GVPO across different values of $\alpha$, with each line corresponding to a different evaluation dataset. The left and right panels show results using DeepSeek-R1-Distill-Llama-8B and DeepSeek-R1-Distill-Qwen-7B as teachers, respectively. Overall, the results indicate that applying little or no length control ($\alpha = 0$ or $0.25$) generally leads to inferior performance compared with stronger length control ($\alpha > 0.25$). However, performance does not necessarily improve monotonically with increasing $\alpha$: in the Qwen-based experiments, for example, the best performance is achieved at $\alpha=0.75$. These results suggest that a moderately large $\alpha$ provides a reasonable starting point, while further tuning of this hyperparameter may yield additional performance gains.

\begin{figure}[h]
    \centering

    \begin{minipage}[b]{0.48\columnwidth}
        \centering
        \begin{tikzpicture}
        \begin{axis}[
            xlabel=$\alpha$,
            ylabel=Accuracy,
            xtick={0,0.25,0.5,0.75,1.0},
            xmin=0,
            xmax=1.0,
            ymin=0,
            ymax=0.8,
            grid=both,
            ticklabel style={font=\tiny},
            label style={font=\small},
            width=5.2cm,
            height=4.5cm,
        ]

        % AIME2024
        \addplot+[smooth, black, mark=*] coordinates {
            (0.0, 0.059375)
            (0.25, 0.051042)
            (0.5, 0.076042)
            (0.75, 0.080208)
            (1.0, 0.090625)
        };

        % AMC
        \addplot+[smooth, black, mark=square*] coordinates {
            (0.0, 0.289157)
            (0.25, 0.277108)
            (0.5, 0.373494)
            (0.75, 0.373494)
            (1.0, 0.397590)
        };

        % MATH500
        \addplot+[smooth, black, mark=triangle*] coordinates {
            (0.0, 0.504000)
            (0.25, 0.540000)
            (0.5, 0.688000)
            (0.75, 0.676000)
            (1.0, 0.684000)
        };

        % Minerva
        \addplot+[smooth, black, mark=diamond*] coordinates {
            (0.0, 0.194853)
            (0.25, 0.205882)
            (0.5, 0.283088)
            (0.75, 0.283088)
            (1.0, 0.294118)
        };

        % Olympiadbench
        \addplot+[smooth, black, mark=pentagon*] coordinates {
            (0.0, 0.220741)
            (0.25, 0.228148)
            (0.5, 0.311111)
            (0.75, 0.302222)
            (1.0, 0.333333)
        };

        \end{axis}
        \end{tikzpicture}
    \end{minipage}
    \hfill
    \begin{minipage}[b]{0.48\columnwidth}
        \centering
        \begin{tikzpicture}
        \begin{axis}[
            xlabel=$\alpha$,
            ylabel={},
            yticklabels={},
            xtick={0,0.25,0.5,0.75,1.0},
            xmin=0,
            xmax=1.0,
            ymin=0,
            ymax=0.8,
            grid=both,
            ticklabel style={font=\tiny},
            label style={font=\small},
            width=5.2cm,
            height=4.5cm,
        ]

        % AIME2024
        \addplot+[smooth, black, mark=*] coordinates {
            (0.0, 0.078125)
            (0.25, 0.061458)
            (0.5, 0.064583)
            (0.75, 0.089583)
            (1.0, 0.045833)
        };

        % AMC
        \addplot+[smooth, black, mark=square*] coordinates {
            (0.0, 0.289157)
            (0.25, 0.277108)
            (0.5, 0.325301)
            (0.75, 0.349398)
            (1.0, 0.325301)
        };

        % MATH500
        \addplot+[smooth, black, mark=triangle*] coordinates {
            (0.0, 0.526000)
            (0.25, 0.520000)
            (0.5, 0.658000)
            (0.75, 0.702000)
            (1.0, 0.436000)
        };

        % Minerva
        \addplot+[smooth, black, mark=diamond*] coordinates {
            (0.0, 0.187500)
            (0.25, 0.158088)
            (0.5, 0.316176)
            (0.75, 0.305147)
            (1.0, 0.158088)
        };

        % Olympiadbench
        \addplot+[smooth, black, mark=pentagon*] coordinates {
            (0.0, 0.232593)
            (0.25, 0.225185)
            (0.5, 0.284444)
            (0.75, 0.321481)
            (1.0, 0.208889)
        };

        \end{axis}
        \end{tikzpicture}
    \end{minipage}

    \caption{Ablation on $\alpha$. Each line represents a dataset. The left and right figures show results for Llama and Qwen, respectively.}
    \label{fig:alpha_accuracy}
    \vspace{-2mm}
\end{figure}

\section{Conclusion}

In this paper, we present Group Variance Policy Optimization (GVPO). GVPO guarantees a unique optimal solution, exactly the KL-constrained reward maximization objective. Moreover, it supports flexible sampling distributions that avoids the limitations of on-policy training and importance sampling. Through systematic comparisons with other prominent methods both theoretically and empirically, we establish GVPO as a new paradigm for reliable and versatile LLM post-training and on-policy distillation.

\clearpage

\bibliographystyle{IEEEtran}
\bibliography{main}

@misc{hu2025openreasonerzeroopensourceapproach,
      title={Open-Reasoner-Zero: An Open Source Approach to Scaling Up Reinforcement Learning on the Base Model}, 
      author={Jingcheng Hu and Yinmin Zhang and Qi Han and Daxin Jiang and Xiangyu Zhang and Heung-Yeung Shum},
      year={2025},
      eprint={2503.24290},
      archivePrefix={arXiv},
      primaryClass={cs.LG},
      url={https://arxiv.org/abs/2503.24290}, 
}

@article{sheng2024hybridflow,
  title   = {HybridFlow: A Flexible and Efficient RLHF Framework},
  author  = {Guangming Sheng and Chi Zhang and Zilingfeng Ye and Xibin Wu and Wang Zhang and Ru Zhang and Yanghua Peng and Haibin Lin and Chuan Wu},
  year    = {2024},
  journal = {arXiv preprint arXiv: 2409.19256}
}

@misc{chen2025xverifyefficientanswerverifier,
      title={xVerify: Efficient Answer Verifier for Reasoning Model Evaluations}, 
      author={Ding Chen and Qingchen Yu and Pengyuan Wang and Wentao Zhang and Bo Tang and Feiyu Xiong and Xinchi Li and Minchuan Yang and Zhiyu Li},
      year={2025},
      eprint={2504.10481},
      archivePrefix={arXiv},
      primaryClass={cs.CL},
      url={https://arxiv.org/abs/2504.10481}, 
}

@InProceedings{pmlr-v37-schulman15,
  title = 	 {Trust Region Policy Optimization},
  author = 	 {Schulman, John and Levine, Sergey and Abbeel, Pieter and Jordan, Michael and Moritz, Philipp},
  booktitle = 	 {Proceedings of the 32nd International Conference on Machine Learning},
  pages = 	 {1889--1897},
  year = 	 {2015},
  editor = 	 {Bach, Francis and Blei, David},
  volume = 	 {37},
  series = 	 {Proceedings of Machine Learning Research},
  address = 	 {Lille, France},
  month = 	 {07--09 Jul},
  publisher =    {PMLR},
  url = 	 {https://proceedings.mlr.press/v37/schulman15.html}
}

@misc{zhao2025surveylargelanguagemodels,
      title={A Survey of Large Language Models}, 
      author={Wayne Xin Zhao and Kun Zhou and Junyi Li and Tianyi Tang and Xiaolei Wang and Yupeng Hou and Yingqian Min and Beichen Zhang and Junjie Zhang and Zican Dong and Yifan Du and Chen Yang and Yushuo Chen and Zhipeng Chen and Jinhao Jiang and Ruiyang Ren and Yifan Li and Xinyu Tang and Zikang Liu and Peiyu Liu and Jian-Yun Nie and Ji-Rong Wen},
      year={2025},
      eprint={2303.18223},
      archivePrefix={arXiv},
      primaryClass={cs.CL},
      url={https://arxiv.org/abs/2303.18223}, 
}

@misc{tie2025surveyposttraininglargelanguage,
      title={A Survey on Post-training of Large Language Models}, 
      author={Guiyao Tie and Zeli Zhao and Dingjie Song and Fuyang Wei and Rong Zhou and Yurou Dai and Wen Yin and Zhejian Yang and Jiangyue Yan and Yao Su and Zhenhan Dai and Yifeng Xie and Yihan Cao and Lichao Sun and Pan Zhou and Lifang He and Hechang Chen and Yu Zhang and Qingsong Wen and Tianming Liu and Neil Zhenqiang Gong and Jiliang Tang and Caiming Xiong and Heng Ji and Philip S. Yu and Jianfeng Gao},
      year={2025},
      eprint={2503.06072},
      archivePrefix={arXiv},
      primaryClass={cs.CL},
      url={https://arxiv.org/abs/2503.06072}, 
}

@misc{zhou2023comprehensivesurveypretrainedfoundation,
      title={A Comprehensive Survey on Pretrained Foundation Models: A History from BERT to ChatGPT}, 
      author={Ce Zhou and Qian Li and Chen Li and Jun Yu and Yixin Liu and Guangjing Wang and Kai Zhang and Cheng Ji and Qiben Yan and Lifang He and Hao Peng and Jianxin Li and Jia Wu and Ziwei Liu and Pengtao Xie and Caiming Xiong and Jian Pei and Philip S. Yu and Lichao Sun},
      year={2023},
      eprint={2302.09419},
      archivePrefix={arXiv},
      primaryClass={cs.AI},
      url={https://arxiv.org/abs/2302.09419}, 
}

@article{ouyang2022training,
  title={Training language models to follow instructions with human feedback},
  author={Ouyang, Long and Wu, Jeffrey and Jiang, Xu and Almeida, Diogo and Wainwright, Carroll and Mishkin, Pamela and Zhang, Chong and Agarwal, Sandhini and Slama, Katarina and Ray, Alex and others},
  journal={Advances in neural information processing systems},
  volume={35},
  pages={27730--27744},
  year={2022}
}

@article{bai2022training,
  title={Training a helpful and harmless assistant with reinforcement learning from human feedback},
  author={Bai, Yuntao and Jones, Andy and Ndousse, Kamal and Askell, Amanda and Chen, Anna and DasSarma, Nova and Drain, Dawn and Fort, Stanislav and Ganguli, Deep and Henighan, Tom and others},
  journal={arXiv preprint arXiv:2204.05862},
  year={2022}
}

@article{shao2024deepseekmath,
  title={Deepseekmath: Pushing the limits of mathematical reasoning in open language models},
  author={Shao, Zhihong and Wang, Peiyi and Zhu, Qihao and Xu, Runxin and Song, Junxiao and Bi, Xiao and Zhang, Haowei and Zhang, Mingchuan and Li, YK and Wu, Y and others},
  journal={arXiv preprint arXiv:2402.03300},
  year={2024}
}

@article{schulman2017proximal,
  title={Proximal policy optimization algorithms},
  author={Schulman, John and Wolski, Filip and Dhariwal, Prafulla and Radford, Alec and Klimov, Oleg},
  journal={arXiv preprint arXiv:1707.06347},
  year={2017}
}

@article{guo2025deepseek,
  title={Deepseek-r1: Incentivizing reasoning capability in llms via reinforcement learning},
  author={Guo, Daya and Yang, Dejian and Zhang, Haowei and Song, Junxiao and Zhang, Ruoyu and Xu, Runxin and Zhu, Qihao and Ma, Shirong and Wang, Peiyi and Bi, Xiao and others},
  journal={arXiv preprint arXiv:2501.12948},
  year={2025}
}

@article{yu2025dapo,
  title={Dapo: An open-source llm reinforcement learning system at scale},
  author={Yu, Qiying and Zhang, Zheng and Zhu, Ruofei and Yuan, Yufeng and Zuo, Xiaochen and Yue, Yu and Fan, Tiantian and Liu, Gaohong and Liu, Lingjun and Liu, Xin and others},
  journal={arXiv preprint arXiv:2503.14476},
  year={2025}
}

@article{liu2025understanding,
  title={Understanding r1-zero-like training: A critical perspective},
  author={Liu, Zichen and Chen, Changyu and Li, Wenjun and Qi, Penghui and Pang, Tianyu and Du, Chao and Lee, Wee Sun and Lin, Min},
  journal={arXiv preprint arXiv:2503.20783},
  year={2025}
}

@article{gao2024towards,
  title={Towards a unified view of preference learning for large language models: A survey},
  author={Gao, Bofei and Song, Feifan and Miao, Yibo and Cai, Zefan and Yang, Zhe and Chen, Liang and Hu, Helan and Xu, Runxin and Dong, Qingxiu and Zheng, Ce and others},
  journal={arXiv preprint arXiv:2409.02795},
  year={2024}
}

@article{touvron2023llama,
  title={Llama 2: Open foundation and fine-tuned chat models},
  author={Touvron, Hugo and Martin, Louis and Stone, Kevin and Albert, Peter and Almahairi, Amjad and Babaei, Yasmine and Bashlykov, Nikolay and Batra, Soumya and Bhargava, Prajjwal and Bhosale, Shruti and others},
  journal={arXiv preprint arXiv:2307.09288},
  year={2023}
}

@article{rafailov2023direct,
  title={Direct preference optimization: Your language model is secretly a reward model},
  author={Rafailov, Rafael and Sharma, Archit and Mitchell, Eric and Manning, Christopher D and Ermon, Stefano and Finn, Chelsea},
  journal={Advances in Neural Information Processing Systems},
  volume={36},
  pages={53728--53741},
  year={2023}
}

@article{bradley1952rank,
  title={Rank analysis of incomplete block designs: I. The method of paired comparisons},
  author={Bradley, Ralph Allan and Terry, Milton E},
  journal={Biometrika},
  volume={39},
  number={3/4},
  pages={324--345},
  year={1952},
  publisher={JSTOR}
}

@inproceedings{jaques2017sequence,
  title={Sequence tutor: Conservative fine-tuning of sequence generation models with kl-control},
  author={Jaques, Natasha and Gu, Shixiang and Bahdanau, Dzmitry and Hern{\'a}ndez-Lobato, Jos{\'e} Miguel and Turner, Richard E and Eck, Douglas},
  booktitle={International Conference on Machine Learning},
  pages={1645--1654},
  year={2017},
  organization={PMLR}
}

@InProceedings{hong2024energybasedpreferencemodeloffers,
  title = 	 {Energy-Based Preference Model Offers Better Offline Alignment than the Bradley-Terry Preference Model},
  author =       {Hong, Yuzhong and Zhang, Hanshan and Bao, Junwei and Jiang, Hongfei and Song, Yang},
  booktitle = 	 {Proceedings of the 42nd International Conference on Machine Learning},
  pages = 	 {23787--23804},
  year = 	 {2025},
  volume = 	 {267},
  series = 	 {Proceedings of Machine Learning Research},
  month = 	 {13--19 Jul},
  publisher =    {PMLR},
  url = 	 {https://proceedings.mlr.press/v267/hong25i.html},
}

@article{tang2024generalized,
  title={Generalized preference optimization: A unified approach to offline alignment},
  author={Tang, Yunhao and Guo, Zhaohan Daniel and Zheng, Zeyu and Calandriello, Daniele and Munos, R{\'e}mi and Rowland, Mark and Richemond, Pierre Harvey and Valko, Michal and Pires, Bernardo {\'A}vila and Piot, Bilal},
  journal={arXiv preprint arXiv:2402.05749},
  year={2024}
}

@inproceedings{bong2022generalized,
  title={Generalized results for the existence and consistency of the MLE in the Bradley-Terry-Luce model},
  author={Bong, Heejong and Rinaldo, Alessandro},
  booktitle={International Conference on Machine Learning},
  pages={2160--2177},
  year={2022},
  organization={PMLR}
}

@article{williams1992simple,
  title={Simple statistical gradient-following algorithms for connectionist reinforcement learning},
  author={Williams, Ronald J},
  journal={Machine learning},
  volume={8},
  pages={229--256},
  year={1992},
  publisher={Springer}
}

@inproceedings{ahmed2019understanding,
  title={Understanding the impact of entropy on policy optimization},
  author={Ahmed, Zafarali and Le Roux, Nicolas and Norouzi, Mohammad and Schuurmans, Dale},
  booktitle={International conference on machine learning},
  pages={151--160},
  year={2019},
  organization={PMLR}
}

@misc{wu2022asymptoticcomparisonidentifyingconstraints,
      title={Asymptotic comparison of identifying constraints for Bradley-Terry models}, 
      author={Weichen Wu and Brian W. Junker and Nynke M. D. Niezink},
      year={2022},
      eprint={2205.04341},
      archivePrefix={arXiv},
      primaryClass={math.ST},
      url={https://arxiv.org/abs/2205.04341}, 
}

@article{singh2000convergence,
  title={Convergence results for single-step on-policy reinforcement-learning algorithms},
  author={Singh, Satinder and Jaakkola, Tommi and Littman, Michael L and Szepesv{\'a}ri, Csaba},
  journal={Machine learning},
  volume={38},
  pages={287--308},
  year={2000},
  publisher={Springer}
}

@article{hendrycksmath2021,
    title={Measuring Mathematical Problem Solving With the MATH Dataset},
    author={Dan Hendrycks
    and Collin Burns
    and Saurav Kadavath
    and Akul Arora
    and Steven Basart
    and Eric Tang
    and Dawn Song
    and Jacob Steinhardt},
    journal={arXiv preprint arXiv:2103.03874},
    year={2021}
}

@article{li2024numinamath,
  title={Numinamath: The largest public dataset in ai4maths with 860k pairs of competition math problems and solutions},
  author={Li, Jia and Beeching, Edward and Tunstall, Lewis and Lipkin, Ben and Soletskyi, Roman and Huang, Shengyi and Rasul, Kashif and Yu, Longhui and Jiang, Albert Q and Shen, Ziju and others},
  journal={Hugging Face repository},
  volume={13},
  pages={9},
  year={2024}
}

@article{lewkowycz2022solving,
  title={Solving quantitative reasoning problems with language models},
  author={Lewkowycz, Aitor and Andreassen, Anders and Dohan, David and Dyer, Ethan and Michalewski, Henryk and Ramasesh, Vinay and Slone, Ambrose and Anil, Cem and Schlag, Imanol and Gutman-Solo, Theo and others},
  journal={Advances in Neural Information Processing Systems},
  volume={35},
  pages={3843--3857},
  year={2022}
}

@article{he2024olympiadbench,
  title={Olympiadbench: A challenging benchmark for promoting agi with olympiad-level bilingual multimodal scientific problems},
  author={He, Chaoqun and Luo, Renjie and Bai, Yuzhuo and Hu, Shengding and Thai, Zhen Leng and Shen, Junhao and Hu, Jinyi and Han, Xu and Huang, Yujie and Zhang, Yuxiang and others},
  journal={arXiv preprint arXiv:2402.14008},
  year={2024}
}

@article{hu2025reinforce++,
  title={Reinforce++: A simple and efficient approach for aligning large language models},
  author={Hu, Jian},
  journal={arXiv preprint arXiv:2501.03262},
  year={2025}
}

@article{zhang2025rspo,
  title={RSPO: Risk-Seeking Policy Optimization for Pass@ k and Max@ k Metrics in Large Language Models},
  author={Zhang, Kaichen and Gao, Shenghao and Hong, Yuzhong and Sun, Haipeng and Bao, Junwei and Jiang, Hongfei and Song, Yang and Dingqian, Hong and Xiong, Hui},
  journal={arXiv preprint arXiv:2508.01174},
  year={2025}
}

@inproceedings{fedus2020revisiting,
  title={Revisiting fundamentals of experience replay},
  author={Fedus, William and Ramachandran, Prajit and Agarwal, Rishabh and Bengio, Yoshua and Larochelle, Hugo and Rowland, Mark and Dabney, Will},
  booktitle={International conference on machine learning},
  pages={3061--3071},
  year={2020},
  organization={PMLR}
}

@article{tokdar2010importance,
  title={Importance sampling: a review},
  author={Tokdar, Surya T and Kass, Robert E},
  journal={Wiley Interdisciplinary Reviews: Computational Statistics},
  volume={2},
  number={1},
  pages={54--60},
  year={2010},
  publisher={Wiley Online Library}
}

@article{qian2025beyond,
  title={Beyond the Next Token: Towards Prompt-Robust Zero-Shot Classification via Efficient Multi-Token Prediction},
  author={Qian, Junlang and Zhu, Zixiao and Zhou, Hanzhang and Feng, Zijian and Zhai, Zepeng and Mao, Kezhi},
  journal={arXiv preprint arXiv:2504.03159},
  year={2025}
}

@book{miao2024optimizing,
  title={Optimizing the Unknown: Reinforcement Learning and Energy-Based Model for Black Box Bayesian Optimization},
  author={Miao, Ruiyao},
  year={2024},
  publisher={University of California, Los Angeles}
}

@article{li2023remax,
  title={Remax: A simple, effective, and efficient reinforcement learning method for aligning large language models},
  author={Li, Ziniu and Xu, Tian and Zhang, Yushun and Lin, Zhihang and Yu, Yang and Sun, Ruoyu and Luo, Zhi-Quan},
  journal={arXiv preprint arXiv:2310.10505},
  year={2023}
}

@article{skalse2022defining,
  title={Defining and characterizing reward gaming},
  author={Skalse, Joar and Howe, Nikolaus and Krasheninnikov, Dmitrii and Krueger, David},
  journal={Advances in neural information processing systems},
  volume={35},
  pages={9460--9471},
  year={2022}
}

@article{zhang2026gvpo,
  title={Gvpo: Group variance policy optimization for large language model post-training},
  author={Zhang, Kaichen and Hong, Yuzhong and Bao, Junwei and Jiang, Hongfei and Song, Yang and Dingqian, Hong and Xiong, Hui},
  journal={Advances in Neural Information Processing Systems},
  volume={38},
  pages={165798--165820},
  year={2026}
}

@misc{song2026surveyonpolicydistillationlarge,
      title={A Survey of On-Policy Distillation for Large Language Models}, 
      author={Mingyang Song and Mao Zheng},
      year={2026},
      eprint={2604.00626},
      archivePrefix={arXiv},
      primaryClass={cs.LG},
      url={https://arxiv.org/abs/2604.00626}, 
}

@article{gou2021knowledge,
  title={Knowledge distillation: A survey},
  author={Gou, Jianping and Yu, Baosheng and Maybank, Stephen J and Tao, Dacheng},
  journal={International journal of computer vision},
  volume={129},
  number={6},
  pages={1789--1819},
  year={2021},
  publisher={Springer}
}

@inproceedings{gu2024minillm,
  title={Minillm: Knowledge distillation of large language models},
  author={Gu, Yuxian and Dong, Li and Wei, Furu and Huang, Minlie},
  booktitle={International Conference on Learning Representations},
  volume={2024},
  pages={32694--32717},
  year={2024}
}

@inproceedings{agarwal2024policy,
  title={On-policy distillation of language models: Learning from self-generated mistakes},
  author={Agarwal, Rishabh and Vieillard, Nino and Zhou, Yongchao and Stanczyk, Piotr and Ramos Garea, Sabela and Geist, Matthieu and Bachem, Olivier},
  booktitle={International Conference on Learning Representations},
  volume={2024},
  pages={21246--21263},
  year={2024}
}

@inproceedings{kim2016sequence,
  title={Sequence-level knowledge distillation},
  author={Kim, Yoon and Rush, Alexander M},
  booktitle={Proceedings of the 2016 conference on empirical methods in natural language processing},
  pages={1317--1327},
  year={2016}
}

@article{hinton2015distilling,
  title={Distilling the knowledge in a neural network},
  author={Hinton, Geoffrey and Vinyals, Oriol and Dean, Jeff},
  journal={arXiv preprint arXiv:1503.02531},
  year={2015}
}

@misc{zheng2025groupsequencepolicyoptimization,
      title={Group Sequence Policy Optimization}, 
      author={Chujie Zheng and Shixuan Liu and Mingze Li and Xiong-Hui Chen and Bowen Yu and Chang Gao and Kai Dang and Yuqiong Liu and Rui Men and An Yang and Jingren Zhou and Junyang Lin},
      year={2025},
      eprint={2507.18071},
      archivePrefix={arXiv},
      primaryClass={cs.LG},
      url={https://arxiv.org/abs/2507.18071}, 
}

@misc{zhao2025geometricmeanpolicyoptimization,
      title={Geometric-Mean Policy Optimization}, 
      author={Yuzhong Zhao and Yue Liu and Junpeng Liu and Jingye Chen and Xun Wu and Yaru Hao and Tengchao Lv and Shaohan Huang and Lei Cui and Qixiang Ye and Fang Wan and Furu Wei},
      year={2025},
      eprint={2507.20673},
      archivePrefix={arXiv},
      primaryClass={cs.CL},
      url={https://arxiv.org/abs/2507.20673}, 
}

@article{chu2025gpg,
  title={Gpg: A simple and strong reinforcement learning baseline for model reasoning},
  author={Chu, Xiangxiang and Huang, Hailang and Zhang, Xiao and Wei, Fei and Wang, Yong},
  journal={ICLR},
  year={2026}
}

@misc{minimax2025minimaxm1scalingtesttimecompute,
      title={MiniMax-M1: Scaling Test-Time Compute Efficiently with Lightning Attention}, 
      author={MiniMax},
      year={2025},
      eprint={2506.13585},
      archivePrefix={arXiv},
      primaryClass={cs.CL},
      url={https://arxiv.org/abs/2506.13585}, 
}

@misc{stojanovski2025reasoninggymreasoningenvironments,
      title={REASONING GYM: Reasoning Environments for Reinforcement Learning with Verifiable Rewards},
      author={Zafir Stojanovski and Oliver Stanley and Joe Sharratt and Richard Jones and Abdulhakeem Adefioye and Jean Kaddour and Andreas Köpf},
      year={2025},
      eprint={2505.24760},
      archivePrefix={arXiv},
      primaryClass={cs.LG},
      url={https://arxiv.org/abs/2505.24760},
}

@article{kumar2026llm,
  title={Llm post-training: A deep dive into reasoning large language models},
  author={Kumar, Komal and Ashraf, Tajamul and Thawakar, Omkar and Anwer, Rao Muhammad and Cholakkal, Hisham and Shah, Mubarak and Yang, Ming-Hsuan and Torr, Phillip HS and Khan, Fahad Shahbaz and Khan, Salman},
  journal={IEEE Transactions on Pattern Analysis and Machine Intelligence},
  year={2026},
  publisher={IEEE}
}

@article{zheng2026lifelong,
  title={Lifelong learning of large language model based agents: A roadmap},
  author={Zheng, Junhao and Shi, Chengming and Cai, Xidi and Li, Qiuke and Zhang, Duzhen and Li, Chenxing and Yu, Dong and Ma, Qianli},
  journal={IEEE Transactions on Pattern Analysis and Machine Intelligence},
  year={2026},
  publisher={IEEE}
}

@article{xu2026parameter,
  title={Parameter-efficient fine-tuning methods for pretrained language models: A critical review and assessment},
  author={Xu, Lingling and Xie, Haoran and Qin, S Joe and Tao, Xiaohui and Wang, Fu Lee},
  journal={IEEE Transactions on Pattern Analysis and Machine Intelligence},
  year={2026},
  publisher={IEEE}
}

@article{besta2025demystifying,
  title={Demystifying chains, trees, and graphs of thoughts},
  author={Besta, Maciej and Memedi, Florim and Zhang, Zhenyu and Gerstenberger, Robert and Piao, Guangyuan and Blach, Nils and Nyczyk, Piotr and Copik, Marcin and Kwa{\'s}niewski, Grzegorz and M{\"u}ller, J{\"u}rgen and others},
  journal={IEEE Transactions on Pattern Analysis and Machine Intelligence},
  volume={47},
  number={12},
  pages={10967--10989},
  year={2025},
  publisher={IEEE}
}

@article{gong2025pushing,
  title={Pushing the limit of post-training quantization},
  author={Gong, Ruihao and Liu, Xianglong and Li, Yuhang and Fan, Yunqiang and Wei, Xiuying and Guo, Jinyang},
  journal={IEEE transactions on pattern analysis and machine intelligence},
  year={2025},
  publisher={IEEE}
}

@article{lightman2023lets,
      title={Let's Verify Step by Step}, 
      author={Lightman, Hunter and Kosaraju, Vineet and Burda, Yura and Edwards, Harri and Baker, Bowen and Lee, Teddy and Leike, Jan and Schulman, John and Sutskever, Ilya and Cobbe, Karl},
      journal={arXiv preprint arXiv:2305.20050},
      year={2023}
}

\clearpage
\onecolumn
\appendix
\subsection{Code Implementation}

\label{app:code}

We present a minimal viable implementation of GVPO based on the framework of verl. In this implementation, each input batch consists of the $k$ responses generated for a given prompt. The advantage computation simply follows that of GRPO without the standard-deviation normalization, yielding $R - \bar{R}$. The implementation then minimizes the mean squared error loss between the implicit reward central distances and their corresponding actual reward central distances.

\begin{lstlisting}[caption={A Simple GVPO Code Implementation}, label={lst:policy_loss}]
def compute_policy_loss(old_log_prob, log_prob, advantages, eos_mask, **kwargs):
    scores = (log_prob * eos_mask).sum(dim=-1)
    scores_old = (old_log_prob * eos_mask).sum(dim=-1)
    advs = (advantages * eos_mask).sum(dim=-1) / eos_mask.sum(dim=-1)
    
    beta = 0.1
    k = scores.size(0)
    
    scores_new = scores.detach()
    loss = -1 * beta * scores * (advs - beta * ((scores_new - scores_new.mean()) - (scores_old - scores_old.mean())))
    
    return loss.mean()
\end{lstlisting}

Moreover, the code implementation of GVPO has been integrated into the verl official GitHub repository.

\section{Proofs}
\label{app:proof}
\subsection{Proof of Theorem~\ref{theorem:gvpo}}
\label{app:proof1}
\textbf{Theorem~\ref{theorem:gvpo}}. \textit{The unique optimal policy that minimizes $\hat{\mathcal{L}}_{\text{GVPO}}(\theta)$, defined as
\[
\hat{\mathcal{L}}_{\text{GVPO}}(\theta)=
\mathbb{E}_{x \sim \mathcal{D}} \mathbb{E}_{y \sim \pi_{s}(\cdot|x)}[(R_\theta(x,y)-
\mathbb{E}_{y \sim \pi_{s}}R_\theta(x,y))-(R(x,y)-
\mathbb{E}_{y \sim \pi_{s}}R(x,y))]^2
\]
, is given by $\pi_\theta (y|x)=\pi^* (y|x)= \frac{1}{Z(x)}\pi_{\theta^\prime}(y|x)e^{R(x,y)/\beta}$ 
for $\pi_s=\pi_{\theta^\prime}$.}

\begin{proof}
We prove the theorem by establishing both necessity and sufficiency.

\textbf{Necessity:} If $\pi_\theta(y|x) = \pi^*(y|x)$, then it is an optimal policy solution.  

The loss function $\hat{\mathcal{L}}_{\text{GVPO}}(\theta)$ is non-negative because it represents the expectation of a squared term:
\[
\hat{\mathcal{L}}_{\text{GVPO}}(\theta) = \mathbb{E}_{x,y} \left[ \big((R_\theta(x,y) - \mathbb{E}_y R_\theta(x,y)) - (R(x,y) - \mathbb{E}_y R(x,y)\big)^2 \right] \geq 0.
\]
When $\pi_\theta(y|x) = \pi^*(y|x)$, we have $R_\theta(x,y) = R(x,y)$. Substituting this into the loss function gives $\hat{\mathcal{L}}_{\text{GVPO}}(\theta) = 0$, confirming that $\pi^*$ achieves the minimum loss.

\textbf{Sufficiency:} If a policy $\pi_\theta$ is optimal, then $\pi_\theta(y|x) = \pi^*(y|x)$.  

Assume for contradiction that there exists an optimal policy $\pi_\theta \neq \pi^*$. Since $\pi_\theta$ is optimal, $\hat{\mathcal{L}}_{\text{GVPO}}(\theta) = 0$. This implies:
\[
(R_\theta(x,y) - \mathbb{E}_y R_\theta(x,y)) = (R(x,y) - \mathbb{E}_y R(x,y)), \quad \forall x, y \text{ s.t. } \pi_s(y|x)>0
\]
Rewriting $R_\theta$ and $R$ in terms of their respective policies:
\[
\beta log \pi_\theta(y|x) - \mathbb{E}_y R_\theta(x,y) = \beta log \pi^*(y|x) - \mathbb{E}_y R(x,y).
\]
Rearranging terms yields:
\[
\pi_\theta(y|x) = \exp\left(\frac{\mathbb{E}_y [R_\theta(x,y) - R(x,y)]}{\beta}\right) \pi^*(y|x).
\]
Since $\sum_{y\in\{y|\pi_{\theta^\prime}(y|x)>0\}} \pi_\theta(y|x) = \sum_{y\in\{y|\pi_{\theta^\prime}(y|x)>0\}} \pi^*(y|x) = 1$, we must have:
\[
\sum_y\pi_\theta(y|x) = \exp\left(\frac{\mathbb{E}_y [R_\theta(x,y) - R(x,y)]}{\beta}\right) \sum_y\pi^*(y|x)
\]
\[
\implies \exp\left(\frac{\mathbb{E}_y [R_\theta(x,y) - R(x,y)]}{\beta}\right) = 1
\]
Thus, $\pi_\theta(y|x) = \pi^*(y|x)$ for all $x, y$, contradicting the assumption $\pi_\theta \neq \pi^*$.  

Since both necessity and sufficiency hold, the optimal policy is uniquely $\pi^*$.
\end{proof}

\subsection{Proof of Theorem~\ref{theorem:gvpo_alg}}
\label{app:proof2}
\textbf{Theorem~\ref{theorem:gvpo_alg}}. \textit{The $n$-step online algorithm, which uses $\hat{\mathcal{L}}_{\text{GVPO}}(\theta_t)$ to iteratively update the initial policy $\pi_{\theta_0}$ by setting $\pi_{\theta'} = \pi_{\theta_{t-1}}$ at each step $t = 1, \dots, n$, maximizes the objective:
\[
\mathbb{E}_{x\sim\mathcal{D}, y\sim\pi_\theta(y|x)}[R(x,y)] - \frac{\beta}{n}\mathbb{D}_{\text{KL}}[\pi_\theta(y|x) \| \pi_{\theta_0}(y|x)].
\]
}

\begin{proof}
By Theorem~\ref{theorem:gvpo}, for each step $t = 1, \ldots, n$, we have:
\[
\beta \log\left(\frac{\pi_{\theta_t}(y|x)}{\pi_{\theta_{t-1}}(y|x)}\right) + \beta \log Z_{t-1}(x) = R(x,y),
\]
where $Z_{t-1}(x) = \sum_{y} \pi_{\theta_{t-1}}(y|x) e^{R(x,y)/\beta}$. Summing these equations for all $t$ from 1 to $n$ yields:
\[
\beta \log\left(\frac{\pi_{\theta_n}(y|x)}{\pi_{\theta_0}(y|x)}\right) + \beta \log \prod_{i=0}^{n-1} Z_i(x) = nR(x,y).
\]
Let $Z_{0:n-1}(x) \triangleq \prod_{i=0}^{n-1} Z_i(x)$. Rearranging terms gives:
\[
\pi_{\theta_n}(y|x) = \frac{1}{Z_{0:n-1}(x)}\pi_{\theta_0}(y|x)e^{nR(x,y)/\beta}.
\]

Next, consider the optimization problem in Equation~\ref{equation:gvpo_alg}:
\begin{equation}
\begin{aligned}
    &\max_{\pi_\theta}\mathbb{E}_{x\sim\mathcal{D},y\sim\pi_\theta(y|x)}[R(x,y)]-\frac{\beta}{n}\mathbb{D}_{KL}[\pi_\theta(y|x)||\pi_{\theta_0}(y|x)]\\
    =&\min_{\pi_\theta}\mathbb{E}_{x\sim\mathcal{D},y\sim\pi_\theta(y|x)}[
    log\frac{\pi_\theta(y|x)}{\pi_{\theta_0}(y|x)}-\frac{n}{\beta}R(x,y)] \\
    =&\min_{\pi_\theta}\mathbb{E}_{x\sim\mathcal{D},y\sim\pi_\theta(y|x)}[
    log\frac{\pi_\theta(y|x)}{\frac{1}{Z_{0:n-1}(x)}\pi_{\theta_0}(y|x)e^{nR(x,y)/\beta}}
    -log Z_{0:n-1}(x)] \\
    =&\min_{\pi_\theta}\mathbb{E}_{x\sim\mathcal{D},y\sim\pi_\theta(y|x)}[
    log\frac{\pi_\theta(y|x)}{\pi_{\theta_n}(y|x)}
    -log Z_{0:n-1}(x)] \\
    =&\min_{\pi_\theta}\mathbb{E}_{x\sim\mathcal{D}}[\mathbb{D}_{KL}(\pi_\theta(y|x)||\pi_{\theta_n}(y|x)) -log Z_{0:n-1}(x)]
\end{aligned}
\end{equation}
The minimum is achieved when the KL divergence is 0, i.e., when $\pi_\theta(y|x) = \pi_{\theta_n}(y|x)$. Hence, the optimal solution to Equation~\ref{equation:gvpo_alg} is $\pi_{\theta_n}(y|x)$.
\end{proof}

\subsection{Evaluations on Summarization} \label{app:sum}
To evaluate GVPO in the context of alignment, we conduct experiments on a summarization task following the experimental setup of DPO. Specifically, we used the Reddit dataset from \url{https://github.com/openai/summarize-from-feedback} and the reward model OpenAssistant/reward-model-deberta-v3-large-v2. First, we fine-tuned Qwen2.5 1.5B using the dataset to obtain an SFT reference model. We then applied both DPO and GVPO for post-training. The training data was sampled from the reference model and scored using the reward model.

For evaluation, we considered:
\begin{itemize}[leftmargin=0.5cm]
    \item The average reward of generated summaries on the test split.
    \item Win rate against human-written demonstration answers, as scored by the reward model.
    \item Preference accuracy using human-labeled comparison data: both DPO and GVPO are trained via
$R_\theta(x, y) = \beta \frac{\log \pi_\theta(y|x)}{\log \pi_{\text{ref}}(y|x)} + \beta \log Z(x)$. Given a preference pair $(y_w, y_l)$ where $y_w$ is preferred, we compute whether
$\frac{\log \pi_\theta(y_w|x)}{\log \pi_{\text{ref}}(y_w|x)} > \frac{\log \pi_\theta(y_l|x)}{\log \pi_{\text{ref}}(y_l|x)}$.
    \item Benchmarking by a set of powerful LLMs.
    \item Human evaluations by crowd-sourcing workers. We hire three workers to label their preference on 100 samples on Prolific platform.
\end{itemize}

\begin{table}[htbp]
\centering
\caption{Algorithm Performance Comparison on Summarization}
\label{tab:sum}
\begin{tabular}{lccccccc}
\toprule
Algorithm & Reward & Win Rate & Accuracy & Gpt-4o & Gemini-2.5-pro & Deepseek-R1 & Human \\
\midrule
SFT & 2.84 & 41.85\% & --- & --- & --- & --- & ---  \\
+DPO & 4.83 & 68.28\% & 60.43\% & 31\% & 40\% & 25\% & 34\%  \\
+GVPO & 5.75 & 79.49\% & 64.93\% & 69\% & 60\% & 75\% & 66\% \\
\bottomrule
\end{tabular}
\end{table}

The results indicate that GVPO outperforms DPO across these metrics, highlighting its effectiveness for alignment beyond the reasoning setting considered in the main text.

\newpage

\begin{table*}[t]
\centering
\caption{Algorithm Performance Comparison on Reasoning Gym across Multiple Domains}
\label{tab:gym_app}
\begin{tabular}{lccccc}
\toprule
Algorithm & Algorithmic Thinking & Constraint Satisfaction & Logical Reasoning & Mathematical Domains & Pattern Recognition \\
\midrule
Qwen2.5-1.5B-Instruct & 7.00 & 1.24 & 24.17 & 22.73 & 13.90  \\
\quad+GRPO     & 27.95 & 11.90 & 61.83 & 37.52 & 25.65  \\
\quad+\textbf{GVPO}     & \textbf{38.38} & \textbf{17.83} & \textbf{70.72} & \textbf{63.90} & \textbf{30.07} \\
\bottomrule
\end{tabular}
\end{table*}

\subsection{Evaluations on Reasoning Gym} \label{app:gym}

We further evaluate GVPO on the Reasoning Gym benchmark, which comprises 104 tasks spanning a diverse range of domains, including algorithmic reasoning, constraint satisfaction, logical reasoning, mathematics, and pattern recognition. Each task provides an algorithmically verifiable environment that supports reinforcement learning–based training and evaluation. Responses receive rewards of 0, 1, or intermediate values according to their degree of correctness. Its broad task coverage, verifiable evaluation protocols, and non-binary reward structure make Reasoning Gym a rigorous testbed for evaluating the generalizability of our proposed method.

For our experimental setup, we construct a training set containing 250 questions per task and a test set containing 10 questions per task. We train each model for five epochs over the training set and sample eight responses ($n=8$) per prompt for all algorithms. We use Qwen2.5-1.5B-Instruct as the base model with a learning rate of $5\times10^{-6}$; all other hyperparameters and experimental configurations are kept consistent with those of the main experiments.

As shown in Table~\ref{tab:gym_app}, GVPO consistently achieves substantial improvements across the evaluated domains. The resulting performance trends closely align with those observed in the main experiments, providing further evidence of the robustness and broad applicability of the proposed framework.

\end{document}